\DeclareMathOperator{\Var}{Var}
\newtheorem{deff}{Definition}
\newtheorem{thm}{Theorem}
\newtheorem{prop}{Proposition}
\newtheorem{cor}{Corollary}
\newtheorem{rem}{Remark}
\newenvironment{namedproof}[1]{%
  \par\addvspace{\topsep}\noindent{\bf Proof of #1.} \ignorespaces
}{%
  \hfill\BlackBox\par\addvspace{\topsep}
}
\begin{document}

\title{Revisiting the Neural Tangent Kernel: the role of large width and depth}


\ShortHeadings{Revisiting the Neural Tangent Kernel}{St-Arnaud, Carvalho, Farnadi}

\author{%
  \name William St-Arnaud 
  \email william.st-arnaud@umontreal.ca \\
  \addr Université de Montréal, Mila
  \AND
  \name Margarida Carvalho 
  \email carvalho@iro.umontreal.ca \\
  \addr Université de Montréal, Mila
  \AND
  \name Golnoosh Farnadi
  \email farnadig@mila.quebec \\
  \addr McGill University, Mila
}

\maketitle

\begin{abstract}
    Overparameterized fully-connected neural networks have been shown to behave like kernel models when trained with gradient descent, assuming standard scaling conditions on the width, the learning rate, and the parameter initialization. In the limit of infinitely large widths and infinitesimal learning rate, the obtained kernel provides a description of the learned model's output via a closed-form solution dependent on the architecture and the activation function. The Neural Tangent Kernel, central to this description, remains constant throughout training, a phenomenon that is referred to as ``lazy training'' or within the ``lazy regime''. Prior works show that the ``lazy regime'' leads to non-varying hidden neuron activations in infinitely-wide networks. Moreover, as infinitely-wide networks increase in depth, the Neural Tangent Kernel induces a closed-form solution that is data-independent, hence trivial. The Neural Tangent Kernel seemingly fails to describe the complexity of overparameterized artificial neural networks on two distinct axes: large widths and large depths. In this work, we challenge these two conclusions and open the door to re-evaluating the Neural Tangent Kernel's role in describing the output of overparameterized neural networks. Specifically, we show experimentally that while deviations in the activations of individual hidden neurons vanish, the aggregate norm of these deviations does not. We support this finding with a theoretical result showing that the activations of the last hidden layer do not remain constant with probability 1. Furthermore, we demonstrate that properly scaling the depth and stopping time in infinitely-wide ReLU networks yields a well-behaved, non-trivial output at large dataset sizes. We empirically evaluate the stability of this behavior on large datasets, and we describe the essential properties that enable the generalization of our results to other kernels.
\end{abstract}

\begin{keywords}
neural tangent kernel, convergence, overparameterization, feature learning, lazy regime
\end{keywords}

\section{Introduction}
Deep neural networks achieve remarkable performance across a diverse span of tasks, from classification to image generation~\citep{lecun1998gradient, silver2017mastering, vaswani2017attention,rombach2022high, silver2016mastering, openai2024o1}. In recent years, a particular observation has been made regarding neural networks that are overparameterized. While classical statistical theory indicated that these large networks would fall prey to overfitting, this conclusion is being challenged empirically~\citep{belkin2021fit}. This ``benign overfitting'' phenomenon is not quite well understood and it has led to works analyzing the learning dynamics of overparameterized models updated with gradient descent~\citep{liu2020linearity, liu2022loss, jacot2018neural}. 

One mechanism which describes the learning dynamics of neural networks involves the linearization of the model through the neural tangent kernel (NTK). This process involves a particular scheme for the initialization of the weight matrices, and it has shown its ability to capture complex dependencies in the infinite-width limit~\citep{jacot2018neural}, although understanding its generalization capabilities is still beyond our reach. Paradoxically, subsequent works have highlighted that under the NTK initialization scheme, this linearized model possesses static features that do not change over time~\citep{Chizatetal2019, yang2021tensor}, calling it a ``lazy regime''. Meanwhile, under other initialization schemes, deep neural networks have demonstrated the ability to learn features and not remain static~\citep{yang2021tensor}. Nevertheless, the fact that the ``lazy regime'' captures highly complex statistical dependencies and shows good generalization capabilities on many tasks~\citep{belkin2021fit} prompts the question: ``how can no features be extracted if generalization is present?'' Elucidating this question can reconcile this apparent paradox by providing new insight into how neural networks form their predictions and generalize to new data. Our first contribution contextualizes the limits of the ``lazy regime'' descriptor, and details an alternative way to recover features in this setting. To achieve this, we establish Proposition~\ref{prop:features_dynamic}, which makes use of the closed-form solution of infinite-width ReLU networks derived in \citet{jacot2018neural}.

In parallel to feature learning in the ``lazy regime'' is the role that depth plays in shaping the output of a model. In order to study this aspect, we can take advantage of the closed-form solutions that are derived in certain architectures. For this reason, we focus on infinitely-wide fully-connected ReLU networks, specifically studying the infinite-depth limit. It is known that the limit NTK involved in the linearization of the model leads to a data-independent predictor~\citep{xiao2020disentangling, seleznova2022analyzing}, which at first glance appears to be a theoretical dead-end. Our second contribution overcomes this challenge, addressing two central aspects of the effect of increasing depths: 1) the convergence of the kernel, established in Proposition~\ref{prop:theta_recursive}, and 2) the output of a deep fully-connected ReLU network under infinitely wide hidden layers and infinitesimal learning rate via Theorem~\ref{thm:deep_stable_ntk}. Our results apply to arbitrary data with support on the sphere. Crucially, we relax several constraints found in prior literature. For instance, our approach avoids assumptions regarding the spectrum of the Hermite expansion, the Mercer decomposition of the kernel, or the non-invertibility condition utilized by~\citet{xiao2020disentangling} (see Theorem~\ref{thm:deep_stable_ntk} and the detailed discussion following it). While the kernel of Proposition~\ref{prop:theta_recursive} does converge to a constant matrix, the output mentioned in 2) can be stabilized by properly scaling the depth with respect to the size of the dataset. We achieve this key result by making use of detailed asymptotic analysis. Finally, our work is distinct from the setting of \citet{hanin2020finite, seleznova2022analyzing}, which relates large stochastic fluctuations in the NTK to a large depth-to-width ratio. We instead study the deterministic limit of the neural tangent kernel when the width is much larger than the depth: while we allow the depth to potentially increase to infinity, the rate at which it does so is much slower than the widths of the hidden layers of a neural network and the size of the dataset. 

Taken together, these contributions provide a more comprehensive theoretical understanding of scaling in neural networks. By detailing how features can be recovered in the infinite-width limit, we nuance the ``lazy regime'' characterization and underline a latent capacity for representation changes over time. Furthermore, by establishing a  well-behaved output for infinite-depth networks despite representation power degeneracy, our work motivates the re-evaluation of the NTK for fully-connected ReLU networks and highlights the importance of using theoretical tools that match experimental settings.

\section{Related work}\label{sec:related_work}


The learning dynamics of overparameterized neural networks have been extensively studied through the lenses of kernel methods and Hessian-based analysis. A key development in this area is the neural tangent kernel, introduced by~\citet{jacot2018neural}, which describes how infinitely wide, fully-connected neural networks trained with gradient descent evolve linearly in function space. While theoretically allowing one to study any loss function, their paper pays specific attention to the Euclidean loss to derive a closed-form solution for the output of the model. The NTK framework formalizes how under common assumptions—particularly Gaussian initialization and wide-layer limits—neural networks behave similarly to kernel methods during training. \citet{arora2019exact} extend the NTK to convolutional neural networks (CNN), further highlighting the framework's capacity to capture learning dynamics. Other architectures and losses, such as cross-entropy, have been explored using the NTK dynamics in \citet{yudivergence, fleissner2025infinite}.

Subsequent work has reinforced the kernel-based interpretation of training dynamics through analyses of the Hessian matrix. \citet{liu2020linearity, liu2022loss} and \citet{belkin2021fit} demonstrate that the loss landscape of overparameterized neural networks often exhibits near-linearity, with low-curvature regions and small-norm Hessians, supporting the NTK-based approximation. These findings suggest that network outputs are relatively stable during training, especially for wide architectures with standard initialization. \citet{lee2020finite} provide an in-depth empirical analysis of NTK models and their performance compared to finite-width neural networks of various architectures (e.g. fully-connected, CNNs). One key observation is that NTKs often outperform finite-width networks, yet are usually surpassed by conventional CNNs. 

The NTK typically requires Monte Carlo estimation of expectations over Gaussian distributions, especially when nonlinearities from activations are involved. This reliance on sampling can be computationally expensive and introduces variance in the resulting kernel evaluations. However, closed-form expressions have been derived for certain activation functions such as ReLU~\citep{arora2019fine} and leaky ReLU~\citep{neuraltangents2020}. These closed-form solutions usually involve Gaussian initialization of the weights, but some works also extend this to other distributions such as the more general set of rotationally invariant distributions~\citep{tsuchida2018invariance}. 

In this work, we adopt the standard setup established by \citet{jacot2018neural} and \citet{arora2019exact}, namely fully-connected networks with ReLU activations trained using the mean squared error (MSE) loss. This specific configuration allows us to leverage existing mathematical machinery, most notably the closed-form expressions for the NTK, as mentioned above. With this in mind, we now contextualize our contributions by reviewing the NTK literature along two primary axes: width and depth. We start in Section~\ref{subsec:rolewidth} with works on the role of width, specifically focusing on the ongoing debate surrounding ``lazy training'' versus feature learning. Then, in Section~\ref{subsec:rolewdepth}, we review works related to the impact of depth on the kernel's stability, spectrum, and predictive phases. Synthesizing these perspectives, Section~\ref{subsec:gaps} identifies the resulting gaps in the current literature and provides a road map for how our contributions address these challenges along both axes.

\subsection{Role of width in the NTK}\label{subsec:rolewidth}
In recent years, the infinite width limit of the NTK has often been characterized as a ``lazy regime'' since individual neurons remain close to their initialization as the width increases. This is often contrasted with the ``feature learning'' or ``rich regime'', where weights are bounded away from their initialization after training. The particular initialization scheme that leads to feature learning is referred to as $\mu P$ parameterization~\citep{yang2021tensor}. For networks with one hidden layer, $\mu P$ has been shown to be equivalent to the mean-field regime~\citep{chizat2018global, meimeanfield}, which evaluates the average output of trained networks under different random initializations (with the same initialization scheme). Under specific conditions, including a one-hidden-layer network, this average output in the mean-field regime converges to the global optimum in the space of distributions over network parameters.

\cite{yang2021tensor} compare  $\mu P$ with the NTK under certain conditions---namely one-hidden-layer network with an identity activation function---and show that the NTK lacks feature learning. However, as pointed out by \citet{jacot2018neural}, even if individual neurons stay arbitrarily close to their initialization, the collective effect of having an increasing number of neurons per layer is highly significant. This is further reinforced by the fact that NTK regression can indeed learn to interpolate complex functions. Therefore, as we will highlight later in this paper, we refine the scope of the assertion that the NTK does not learn any features, while the mean-field regime and $\mu P$ parameterization do.

\subsection{Role of depth in the NTK}\label{subsec:rolewdepth}
While prior work has largely focused on width, less attention has been paid to how depth affects NTK sensitivity to initialization. Among the few works addressing this dimension, \citet{bietti2021deep} show that for the uniform measure on the sphere, the reproducing kernel (c.f. reproducing kernel Hilbert space or RKHS) leads to the same representation power regardless of the network depth. This raises the question of the significance of depth for the NTK. \citet{lee2022neural} present an analysis of deep, narrow multi-layer perceptrons (MLPs) and CNNs as depth goes to infinity. They provide a trainability guarantee, given the right initialization, by showing convergence to a limiting kernel and a vanishing training error. However, this initialization is not close to common initialization methods used in practice, and how depth and generalization capabilities interact in infinitely wide networks remains to be fully understood. 

Further insights are provided by~\citet{nguyen2021tight}, who derive an asymptotic lower bound on the smallest eigenvalue of the NTK through the Hermite expansion of the kernel. This result can be used to derive bounds on the generalization of the model~\citep{arora2019fine} and gives a better grasp on understanding the role that depth plays in both convergence and generalization. \citet{murray2023characterizing} characterize the full spectrum of the NTK via the Hermite expansion for arbitrary datasets on the sphere. They recover an empirical observation that the eigenvalues of the NTK follow power law decay with respect to the size of the training set; \citet{li2024eigenvalue} extend this line of results to general domains beyond the sphere. In addition to the previous result from \citet{jacot2018neural} regarding the convergence to kernel regression in the infinite-width case, they also establish a uniform convergence bound on the projection term of the output of the trained model. This improves upon the pointwise convergence bounds found in \citet{lee2019wide, arora2019exact, allen2019convergence}. As in the initial paper by \citet{jacot2018neural}, the limiting kernel is observed to be deterministic. However, each of these results on the spectrum of the NTK relies on the assumption that data is generated from a probability distribution with a density function, and do not extend to the discrete case. It is therefore of practical interest to relax this assumption and work with an arbitrary dataset. In this regard, \citep{karhadkar2024bounds} provide a lower bound on the smallest eigenvalue of the NTK given a separation condition on the dataset. Their work shows that for arbitrary depth, the smallest eigenvalue is bounded away from zero with high probability. The separation condition can potentially change as the dataset size increases, implying that the smallest eigenvalue decreases towards 0. For this reason, we study the interplay between the rate of decrease of the smallest eigenvalue, the separation condition, and the dataset size.

The deterministic limiting kernel is contrasted with the work of \citet{hanin2020finite, seleznova2022analyzing}, where the authors characterize the NTK directly through the ratio of its second and first moment. They show that if the ratio of depth to width grows large, the NTK has a much higher variance than mean, implying that it is highly stochastic. 

\citet{xiao2020disentangling, poole2016exponential} characterize three different phases describing the behaviour of the limiting kernel of infinitely wide neural networks. They show that the ``ordered phase'' and the ``critical phase'' can lead to generalization when trainable, while the ``chaotic phase'' collapses the mean-predictor to a (data-independent) constant predictor exponentially fast. The NTK described in \citet{jacot2018neural} falls into the category of the ordered phase. In practice, this poses a problem since the limiting predictor (i.e., as depth increases) derived in \citet{xiao2020disentangling} becomes data-independent, and cannot generalize well. As we will see in Section~\ref{sec:limiting_kernel}, it is possible to circumvent the lack of expressivity, and we can recover a well-behaved predictor with the NTK in the ordered phase. Our analysis relies on the assumption that depth and stopping time are scaled properly with increasingly large datasets. Because of this, the work of \citet{seleznova2022analyzing} is outside the scope of this paper, since the ratio of depth-to-width is essentially zero. In practice, small depth-to-width ratios are normally the case.

\subsection{Gaps in the literature and road map}\label{subsec:gaps}

In Section~\ref{sec:notation} and~\ref{sec:background}, we establish notation for the remainder of the article, and we review relevant results in the field. Based on our discussion of the relevant literature, we identify critical gaps regarding the feature learning capabilities and depth limits of the NTK. To situate our contributions within this context, we provide the following road map of our findings along the axes of width and depth: 

\begin{description}
    \item[Re-evaluating the ``Lazy Regime'' (Width Axis)] We re-examine the scope of the ``lazy regime'' characterization for the NTK, contrasting it with the feature learning ability of the mean-field regime and more generally, the $\mu P$ parameterization. While individual neurons remain close to their initialization values~\citep{jacot2018neural, yang2021tensor}, we demonstrate through a toy example that the aggregate effect of wide layers provides significant expressivity. In Section~\ref{subsec:featuresNTK}, we demonstrate experimentally that for finite-width yet highly overparameterized ReLU networks, the last hidden layer exhibits significant movement from its initial state, suggesting that the infinite-width ``lazy'' approximation may not capture the full dynamical behavior of deep architectures. We support our experiments with a theoretical result (Proposition~\ref{prop:features_dynamic}) that shows that the feature correlation of pairs of data points will vary during training. Our findings suggest that the binary distinction between ``lazy'' and ``feature learning'' regimes may be overly simplistic. Moreover, the NTK should not be discarded for its lack of feature learning, especially since its generalization capabilities are well-documented~\citep{lee2020finite, arora2019exact, arora2020harnessing}. As such, it can serve as a good baseline for the performance and behaviour of overparameterized networks. Given closed-form solutions to kernel regression in various architectures, one can use them as proxy for the output of overparameterized models that are otherwise impossible to train in practice.
    \item[Resolving Expressivity and Stability in Deep Networks (Depth Axis)] In Section~\ref{sec:limiting_kernel}, we address the degeneration of the NTK's representation power as network depth increases. While the (normalized) infinite-width NTK approaches an invertible matrix, the predictor loses its ability to distinguish data points that are not in the dataset. This collapse can be prevented by carefully scaling the depth as the size of the dataset increases, and also by selecting an appropriate stopping time (see Theorem~\ref{thm:deep_stable_ntk}). This is contrasted with the usual approach in this case, where $t=+\infty$ or the limit $L \to \infty$ is assumed, and kernel regression becomes unstable or loses any ability to interpolate complex functions, which is not the case for our result. Our proof of Theorem~\ref{thm:deep_stable_ntk} thoroughly analyzes asymptotic bounds in the expression of the model's output, allowing us to establish a set of criteria for stabilizing the solution. Prior works have drawn similar conclusions on the role of depth and early stopping~\citep{cao2021towards, li2024eigenvalue}, yet they required data to be generated from a probability distribution admitting a density function. Our work makes no such assumption and adapts to scenarios where the data is generated from a discrete probability distribution, or where data is selected from a fixed pool (i.e. dataset). We summarize key properties to generalize our results to various other settings. The interplay between dataset size,  depth and early stopping further opens the door to studying the training dynamics of large neural networks in more realistic scenarios.  
\end{description}

Section~\ref{sec:conclusion} contains our concluding remarks on feature learning and the role of depth in the ``lazy regime''. Further experiments and theoretical results are included in the appendix.

\section{Notation}\label{sec:notation}
 To highlight the role of the depth $L$ of a neural network, we denote elements that depend on layer $l \in \{1,\ldots,L\}$ with a superscript $(l)$, e.g., $\kappa^{(l)}$. To emphasize the special role of the last layer in the analysis, the superscript $(L)$ is sometimes used in mathematical objects to remind the reader of its link to a neural network of depth $L$, e.g., $\kappa^{(L)}$ refers to a network of depth $L$, while it can also stand on its own as a mathematical object. The width of a layer $l$ is denoted by $n_l$, with $n_0$ referring to the input dimension. Dependence on a time parameter $t$ and the size $n$ of a dataset is sometimes indicated with a subscript to emphasize their importance, but is otherwise omitted. A dataset of size $n$ is denoted $X$, understood as an $n \times n_0$ matrix, where each row $i$ is written as $x_i^\top$ (i.e., the $x_i$'s are column vectors). For a dataset $X$, it is assumed that all rows are different. The notation $[i]$ for $i \in \mathbb{N}_{>0}$ refers to the set $\{1, \dots, i\}$. We use the notation $f(n) \in O(g(n))$ (respectively, $f(n) \in \Omega(g(n))$) to denote that $g$ is an asymptotic upper bound (respectively, lower bound) for $f$. We also use the notation $\mathcal{O}(g(n)) = O(g(n)) \cap \Omega(g(n))$. Activation functions are denoted by $\sigma$, while uppercase $\Sigma$ is reserved for computing ``covariances'' (see Definition~\ref{def:cov}). The limiting deterministic kernels of \citet{jacot2018neural} are represented using $\Theta_{\infty}^{(L)}$, and $\bar{\Theta}_{\infty}^{(L)}$ for their normalized version (see Definition~\ref{def:bar_theta}); the notation $\kappa$ is used when referring to general kernels and $\bar{\kappa}$ for the normalized version. For the sake of simplicity, we consider neural networks with one-dimensional outputs (i.e., $n_L=1$). Therefore, kernels refer in this context to functions $\mathbb{R}^{n_0} \times \mathbb{R}^{n_0} \to \mathbb{R}_+$. We also write $\kappa\left(A \right)$ for the component-wise application of a kernel $\kappa$ to the entries of a matrix $A$. Specifically, $\kappa\left(XX^\top\right)$ denotes applying the kernel to all pairwise dot products in $X$, where $XX^\top$ is the matrix containing those dot products. Similarly, for any function $g: \mathbb{R} \to \mathbb{R}$, the entry-wise application to a matrix $A$ is denoted by $g(A)$. The notation $A \hookleftarrow_{i,j} A'$ refers to the matrix obtained by replacing column $i$ of matrix $A$ with column $j$ of matrix $A'$. The vector of ones of length $n$ is denoted $\mathbf{1}_n$. Finally, the sphere of dimension $n_0 - 1$ is denoted $S^{n_0 - 1}$.

\section{Background on the NTK and overparameterization}\label{sec:background}

\citet{jacot2018neural} show that, under overparameterization and i.i.d. standard normal weight initialization, a fully-connected neural network of arbitrary depth $L$ exhibits learning dynamics that converge to those of kernel gradient flow in the infinite-width limit. They also provide a recursive formula to compute the kernel $\Theta_{\infty}^{(L)}$ to which the NTK converges (see Theorems 1 and 2 from~\citet{jacot2018neural}). However, as mentioned in the literature review, evaluating $\Theta^{(L)}_{\infty}$ relies on computing high-dimensional expectations and can potentially also be subject to sample inefficiency in the approximation, motivating the search for a more readily computable kernel.

To make the kernel more practical, one may ask whether an efficient closed-form expression can be derived for particular activation functions. Of particular interest is the closed-form representation of $\Theta_{\infty}^{(L)}$ for ReLU activations,  derived by~\citet{arora2019exact}, due to their empirical popularity and methodological appeal in theoretical analysis. In the rest of this article, we study this kernel, i.e., with ReLU activation, $\mathcal{N}(0, 1)$ initialization and infinite-width fully connected networks. To this end, let us first introduce two important definitions and recap the recursive formulation of $\Theta_{\infty}^{(L)}$ by \citet{jacot2018neural}.

\begin{deff}[(mean) Covariance of neurons $\Sigma^{(l)}$]\label{def:cov}
    Let $x$ and $x'$ be two inputs in $\mathbb{R}^{n_0}$.
    The covariances of neurons from inputs $x$ and $x'$ at each layer $l$ are defined recursively as
    \begin{align*}
        \Sigma^{(1)}(x, x') &:= \frac{1}{n_0} x^\top x',  \\
        \Sigma^{(l+1)}(x,x') &:= \mathbb{E}_{z \sim \mathcal{N}(0, \Sigma^{(l)})} [ \sigma(z(x)) \sigma(z(x'))]
    \end{align*}
    where $z \sim \mathcal{N}(0, \Sigma^{(l)})$ is an infinite vector indexed through the notation $z(x)$ and $z(x')$ and  $\mathbb{E}[z(x) z(x')] = \Sigma^{(l)}(x, x')$. We also define the variant of $\Sigma^{(l)}$ where we replace $\sigma$ with its derivative $\dot{\sigma}$:
    \begin{equation*}
        \dot{\Sigma}^{(l+1)}(x,x'):= \mathbb{E}_{f \sim \mathcal{N}(0, \Sigma^{(l)})} [ \dot{\sigma}(f(x)) \dot{\sigma}(f(x')) ].
    \end{equation*}
\end{deff}

\begin{deff}[Neural tangent kernel (NTK)]\label{def:ntk}
    For inputs $x$ and $x'$, the neural tangent kernel of the neural network $f( \cdot; \theta)$ with parameters $\theta \in \mathbb{R}^P$ is given by
    \begin{equation*}
    \Theta^{(L)}(x, x') = \sum_{p=1}^P \frac{\partial f(x; \theta_p)}{\partial \theta_p} \otimes \frac{\partial f(x'; \theta_p)}{\partial \theta_p}.
    \end{equation*}
\end{deff}

Definition~\ref{def:cov} captures the initial covariance of the activations under random initialization. The NTK, on the other hand, governs the network evolution during gradient descent. The following theorem formally connects these concepts by describing the kernel's limiting behavior as layer widths approach infinity.

\begin{thm}[\citet{jacot2018neural}]\label{thm:jacot_ntk}
    Consider a fully-connected neural network of depth $L$ with non-linear activation. In the limit, as layer widths $n_1, \dots, n_{L-1} \to \infty$, the neural tangent kernel $\Theta^{(L)}$ (see Definition~\ref{def:ntk}) converges in probability to a deterministic limiting kernel:
    \begin{equation*}
        \Theta^{(L)} \to \Theta_{\infty}^{(L)} \otimes I_{n_L},
    \end{equation*} where $\Theta_{\infty}^{(l)}$ is defined recursively by
    \begin{align*}
        \Theta_{\infty}^{(1)}(x, x') &:= \Sigma^{(1)}(x, x') \\
        \Theta_{\infty}^{(l+1)}(x, x') &:= \dot{\Sigma}^{(l+1)}(x, x') \Theta_{\infty}^{(l)}(x, x') + \Sigma^{(l+1)}(x, x').
    \end{align*}
\end{thm}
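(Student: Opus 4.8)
The statement is (the initialization-time part of) Theorems~1--2 of~\citet{jacot2018neural}, and the plan is to reprove it by an induction on the depth, carried out concretely by sending the hidden widths $n_1,\dots,n_{L-1}$ to infinity one at a time in increasing order of the layer index. Fix a finite set of inputs (it suffices to take $\{x,x'\}$, or a full dataset $X$), and write the network in NTK parameterization, $\tilde\alpha^{(1)}(x)=\tfrac{1}{\sqrt{n_0}}W^{(1)}x$ and $\tilde\alpha^{(l+1)}(x)=\tfrac{1}{\sqrt{n_l}}W^{(l+1)}\alpha^{(l)}(x)$ with $\alpha^{(l)}=\sigma(\tilde\alpha^{(l)})$ and all weights i.i.d.\ $\mathcal N(0,1)$. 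The first step is the Gaussian-process limit: the stacked pre-activation vectors at layer $l$, evaluated on the fixed inputs, converge in distribution to a family of i.i.d.\ centered Gaussian vectors with covariance $\Sigma^{(l)}$ of Definition~\ref{def:cov}. The base case $l=1$ is exact; the inductive step conditions on layer $l$, writes each coordinate of $\tilde\alpha^{(l+1)}$ as a sum of $n_l$ conditionally independent terms, applies a law of large numbers to identify the limiting covariance as $\Sigma^{(l+1)}(x,x')=\mathbb E_{f\sim\mathcal N(0,\Sigma^{(l)})}[\sigma(f(x))\sigma(f(x'))]$, and a conditional central limit theorem to obtain jointly Gaussian limits.

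Next I would decompose the NTK over the weight matrices, $\Theta^{(L)}(x,x')=\sum_{l=1}^{L}\sum_{i,j}\partial_{W^{(l)}_{ij}}f(x)\,\partial_{W^{(l)}_{ij}}f(x')$, and introduce the backpropagated signal $g^{(l)}(x):=\nabla_{\tilde\alpha^{(l)}(x)}f(x)$, which obeys $g^{(L)}\equiv 1$ and $g^{(l)}(x)=\dot\sigma(\tilde\alpha^{(l)}(x))\odot\tfrac{1}{\sqrt{n_l}}(W^{(l+1)})^\top g^{(l+1)}(x)$. Since $\partial_{W^{(l)}_{ij}}f(x)=\tfrac{1}{\sqrt{n_{l-1}}}\,g^{(l)}_i(x)\,\alpha^{(l-1)}_j(x)$, the layer-$l$ block of the kernel factorizes as
\begin{equation*}
\sum_{i,j}\partial_{W^{(l)}_{ij}}f(x)\,\partial_{W^{(l)}_{ij}}f(x')=\big\langle g^{(l)}(x),g^{(l)}(x')\big\rangle\cdot\frac{1}{n_{l-1}}\big\langle \alpha^{(l-1)}(x),\alpha^{(l-1)}(x')\big\rangle,
\end{equation*}
and by Step~1 and the law of large numbers the second factor converges in probability to $\Sigma^{(l)}(x,x')$.

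It then remains to show, by a downward induction on $l$, that $\big\langle g^{(l)}(x),g^{(l)}(x')\big\rangle\to\prod_{l'=l+1}^{L}\dot\Sigma^{(l')}(x,x')$ in probability (the empty product being $1$ at $l=L$, where $g^{(L)}\equiv1$). Assuming the claim at level $l+1$, write $g^{(l)}_i(x)=\dot\sigma(\tilde\alpha^{(l)}_i(x))\,h_i(x)$ with $h_i(x)=\tfrac{1}{\sqrt{n_l}}\sum_k W^{(l+1)}_{ki}g^{(l+1)}_k(x)$; conditioning on the forward pass, the pairs $(h_i(x),h_i(x'))_i$ are (asymptotically) i.i.d.\ Gaussian with $\mathbb E[h_i(x)h_i(x')]=\tfrac{1}{n_l}\big\langle g^{(l+1)}(x),g^{(l+1)}(x')\big\rangle$, so $\big\langle g^{(l)}(x),g^{(l)}(x')\big\rangle\approx\big(\tfrac{1}{n_l}\sum_i\dot\sigma(\tilde\alpha^{(l)}_i(x))\dot\sigma(\tilde\alpha^{(l)}_i(x'))\big)\big\langle g^{(l+1)}(x),g^{(l+1)}(x')\big\rangle$, and the average converges to $\dot\Sigma^{(l+1)}(x,x')=\mathbb E_{f\sim\mathcal N(0,\Sigma^{(l)})}[\dot\sigma(f(x))\dot\sigma(f(x'))]$ by Step~1 and the law of large numbers, closing the induction. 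Combining the two factors, the layer-$l$ block of $\Theta^{(L)}$ converges to $\Sigma^{(l)}(x,x')\prod_{l'=l+1}^{L}\dot\Sigma^{(l')}(x,x')$; summing over $l$ gives the telescoping expression $\Theta_\infty^{(L)}=\sum_{l=1}^{L}\Sigma^{(l)}\prod_{l'=l+1}^{L}\dot\Sigma^{(l')}$, which one checks is exactly the solution of $\Theta_\infty^{(1)}=\Sigma^{(1)}$, $\Theta_\infty^{(l+1)}=\dot\Sigma^{(l+1)}\Theta_\infty^{(l)}+\Sigma^{(l+1)}$; the factor $I_{n_L}$ records that distinct output coordinates have asymptotically independent parameter gradients, and is trivial here since $n_L=1$.

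The main obstacle is this downward induction: the matrix $W^{(l+1)}$ appears in \emph{both} the forward pass (it determines all deeper pre-activations, hence $g^{(l+1)}$) and the backward pass (as $(W^{(l+1)})^\top$ acting on $g^{(l+1)}$), so these quantities are correlated at finite width and the conditional-Gaussianity step above is not literally valid. The remedy is the ``gradient independence'' phenomenon: in the sequential infinite-width limit one may replace the transpose in the backward pass by an independent fresh Gaussian matrix without changing the limit, after which the conditioning argument applies verbatim. Making this substitution rigorous---bounding the cross terms created by the shared randomness and showing they vanish in probability---is the technical heart of the proof; it succeeds precisely because only one width is sent to infinity at a time, and because the assumed regularity of $\sigma$ and $\dot\sigma$ (Lipschitz with polynomially bounded growth, which holds almost everywhere for ReLU) legitimizes all the laws of large numbers and central limit theorems invoked above as well as the definitions of $\Sigma^{(l)}$ and $\dot\Sigma^{(l)}$.
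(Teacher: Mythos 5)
First, note that the paper does not prove this statement at all: it is imported verbatim (without biases, $\beta=0$) from \citet{jacot2018neural}, so there is no in-paper proof to compare against; your proposal has to be judged against the original argument. Your sketch is the standard forward--backward factorization route: Gaussian-process limit for the pre-activations, decomposition of $\Theta^{(L)}$ into per-layer blocks $\langle g^{(l)}(x),g^{(l)}(x')\rangle\cdot\frac{1}{n_{l-1}}\langle\alpha^{(l-1)}(x),\alpha^{(l-1)}(x')\rangle$, a downward induction for the backward factor, and the telescoped sum $\Theta_\infty^{(L)}=\sum_{l=1}^{L}\Sigma^{(l)}\prod_{l'=l+1}^{L}\dot\Sigma^{(l')}$, which indeed solves the stated recursion. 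That skeleton is correct and is essentially the presentation used by \citet{arora2019exact} and \citet{lee2019wide}, not the one in \citet{jacot2018neural}: the original proof is an induction on the depth $L$ itself, viewing the depth-$(L+1)$ network as a depth-$L$ network composed with one fresh layer, and it obtains $\Theta^{(L+1)}_\infty=\dot\Sigma^{(L+1)}\Theta^{(L)}_\infty+\Sigma^{(L+1)}$ directly by a law of large numbers over the newly added width $n_L$.

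The difference is not cosmetic, and it is where your proposal has its only real weakness. In your decomposition the matrix $W^{(l+1)}$ enters both the forward pass (hence $g^{(l+1)}$, through the deeper activations) and the backward pass as $(W^{(l+1)})^\top$, and you resolve this by invoking ``gradient independence'' --- replacing the transpose by a fresh Gaussian --- while acknowledging that bounding the resulting cross terms is ``the technical heart.'' As written, that heart is asserted rather than carried out, so the proposal is an outline whose hardest step is deferred (it can be made rigorous, e.g.\ along the lines of later tensor-program-style analyses, but that is genuinely nontrivial). The induction of \citet{jacot2018neural} avoids the issue entirely: at each inductive step the only weights appearing in the backward factor are those of the freshly appended last layer, which are independent of the depth-$L$ sub-network and of its (already convergent, $\Theta^{(L)}_\infty\otimes I_{n_L}$, hence asymptotically diagonal) NTK, so the conditional LLN applies with no independence surrogate. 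If you want a self-contained argument matching the cited theorem, the depth induction is the shorter and cleaner path; your route buys a more explicit per-layer formula for the limit but at the price of the gradient-independence lemma you have not supplied.
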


We remark that, although we assume \( n_L = 1 \) for the sake of simplicity, Theorem~\ref{thm:jacot_ntk} is stated in its general form for any output dimension \( n_L \in \mathbb{N} \). This theorem is key in the convergence results obtained in Section~\ref{sec:limiting_kernel} (see proofs of Proposition~\ref{prop:theta_recursive} and Theorem~\ref{thm:convergence_theta_bar} in Section~\ref{sec:limiting_kernel}). 

We also note that we presented the version of Theorem~\ref{thm:jacot_ntk} without biases, i.e., $\beta=0$ in the notation of \citet{jacot2018neural}. In fact, it will later be shown that with non-zero biases ($\beta > 0$), we can still recover our main theoretical contribution regarding depth, as stated in Theorem~\ref{thm:deep_stable_ntk}. We refer the reader to the statement of Theorem~\ref{thm:deep_stable_ntk} and the beginning of Section~\ref{sec:experiments} for more details related to the extension of the results for $\beta>0$. 

We are now ready to state the simplified formula for the covariance and the limiting NTK evaluated on positively correlated inputs.
\begin{prop}\label{prop:correlated_sigma}
    For ReLU activation and perfectly positively correlated inputs $x$ and $x'$, it holds that
    \begin{align*}
        \Sigma^{(L)}(x, x') &= \frac{1}{n_0 2^{L-1}} \lVert x \rVert_2 \lVert x' \rVert_2, \quad
        \dot{\Sigma}^{(L)}(x, x') = \frac{1}{2}
    \end{align*}
    and 
    \begin{align*}
        \Theta_{\infty}^{(L+1)}(x, x') &= \frac{1}{2} \Theta_{\infty}^{(L)}(x, x') + \frac{1}{n_0 2^L} \lVert x \rVert_2 \lVert x' \rVert_2 \\
        &= \frac{L+1}{n_0 2^L} \lVert x \rVert_2 \lVert x' \rVert_2.
    \end{align*}
\end{prop}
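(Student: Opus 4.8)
The plan is a double induction, using two structural facts about the ReLU nonlinearity $\sigma$: positive homogeneity, $\sigma(cz)=c\,\sigma(z)$ for every $c\ge 0$, and $\dot\sigma(z)=\mathbf{1}\{z>0\}$ for almost every $z$.

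\textbf{Step 1 (the covariance iterates stay rank one).} I would first show by induction on $l$ that, when $\rho=1$, the $2\times 2$ matrix obtained by restricting $\Sigma^{(l)}$ to the pair $\{x,x'\}$ is rank one with unit correlation and diagonal scaling $\tfrac{1}{n_0 2^{l-1}}$; concretely $\Sigma^{(l)}(x,x)=\tfrac{1}{n_0 2^{l-1}}\lVert x\rVert_2^2$, $\Sigma^{(l)}(x',x')=\tfrac{1}{n_0 2^{l-1}}\lVert x'\rVert_2^2$, and $\Sigma^{(l)}(x,x')=\tfrac{1}{n_0 2^{l-1}}\lVert x\rVert_2\lVert x'\rVert_2$. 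The base case $l=1$ follows directly from Definition~\ref{def:cov}, since $\rho=1$ forces $x^\top x'=\lVert x\rVert_2\lVert x'\rVert_2$. For the inductive step, unit correlation of $\Sigma^{(l)}$ means the centered jointly Gaussian pair $(f(x),f(x'))$ is concentrated on the line $f(x')=\tfrac{\lVert x'\rVert_2}{\lVert x\rVert_2}f(x)$ almost surely; then positive homogeneity gives $\sigma(f(x))\sigma(f(x'))=\tfrac{\lVert x'\rVert_2}{\lVert x\rVert_2}\sigma(f(x))^2$, and the half-Gaussian second-moment identity $\mathbb{E}[\sigma(Z)^2]=\tfrac12\mathbb{E}[Z^2]$ for centered $Z$ yields $\Sigma^{(l+1)}(x,x')=\tfrac12\tfrac{\lVert x'\rVert_2}{\lVert x\rVert_2}\Sigma^{(l)}(x,x)=\tfrac{1}{n_0 2^{l}}\lVert x\rVert_2\lVert x'\rVert_2$. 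Running the same computation on the pairs $(x,x)$ and $(x',x')$ produces the diagonal entries and shows the correlation at layer $l+1$ is again $1$, which closes the induction and, at $l=L$, is the first displayed identity. For $\dot\Sigma$, since $f(x)$ and $f(x')$ have the same sign almost surely, $\dot\Sigma^{(l+1)}(x,x')=\mathbb{P}[f(x)>0,\,f(x')>0]=\mathbb{P}[f(x)>0]=\tfrac12$, which is the second identity.

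\textbf{Step 2 (unroll the NTK recursion).} I would then substitute $\dot\Sigma^{(L+1)}(x,x')=\tfrac12$ and $\Sigma^{(L+1)}(x,x')=\tfrac{1}{n_0 2^{L}}\lVert x\rVert_2\lVert x'\rVert_2$ into the recursion of Theorem~\ref{thm:jacot_ntk}, giving $\Theta_\infty^{(L+1)}(x,x')=\tfrac12\Theta_\infty^{(L)}(x,x')+\tfrac{1}{n_0 2^L}\lVert x\rVert_2\lVert x'\rVert_2$. This is a first-order linear recurrence with initial value $\Theta_\infty^{(1)}(x,x')=\Sigma^{(1)}(x,x')=\tfrac1{n_0}\lVert x\rVert_2\lVert x'\rVert_2$; a one-line induction (or unrolling the resulting finite sum) gives $\Theta_\infty^{(L)}(x,x')=\tfrac{L}{n_0 2^{L-1}}\lVert x\rVert_2\lVert x'\rVert_2$, which is exactly the claimed $\Theta_\infty^{(L+1)}(x,x')=\tfrac{L+1}{n_0 2^L}\lVert x\rVert_2\lVert x'\rVert_2$.

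\textbf{Main obstacle.} The one delicate point is that at $\rho=1$ the Gaussian $\mathcal{N}(0,\Sigma^{(l)})$ in Definition~\ref{def:cov} is degenerate, so the expectations must be read as integrals against a Gaussian supported on a line. I would state this interpretation once at the outset; alternatively one may avoid it by taking the limit $\rho\uparrow 1$ in the ReLU closed forms $\mathbb{E}[\sigma(f(x))\sigma(f(x'))]=\tfrac{\lVert x\rVert_2\lVert x'\rVert_2}{2\pi n_0 2^{l-1}}\bigl(\sin\theta+(\pi-\theta)\cos\theta\bigr)$ and $\mathbb{P}[f(x)>0,f(x')>0]=\tfrac{\pi-\theta}{2\pi}$ with $\theta=\arccos\rho$, both of which give the same values at $\theta=0$. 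The homogeneity route above is cleaner because it never introduces the trigonometric terms; everything else is routine algebra.
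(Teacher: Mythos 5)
Your proposal is correct and follows essentially the same route as the paper's (much terser) proof sketch: exploit $\rho=1$ to reduce $\sigma(f(x))\sigma(f(x'))$ to a squared rectified Gaussian whose second moment is half the variance, note $\mathbb{P}[f(x)>0]=\tfrac12$ for the $\dot\Sigma$ term, and unroll the recursion of Theorem~\ref{thm:jacot_ntk}. Your write-up simply makes explicit the induction and the handling of the degenerate Gaussian that the paper leaves implicit.
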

\begin{proof}[Proof sketch] 
    Note that $\frac{x^\top x'}{\Vert x \rVert \lVert x' \rVert} = 1$ and the product $\sigma^2(z(x))$, where $z \sim \mathcal{N}(0, 1)$, follows a squared rectified Gaussian distribution. The expectation of the rectified Gaussian is $\frac{1}{2} \Var(z)$, where $\Var(z) =1$, since the mean of $z$ is $0$. Moreover, the zero mean implies $z \geq 0$ with probability $\frac{1}{2}$. We can inductively compute $\Sigma^{(L)}$ and $\dot{\Sigma}^{(L)}$ by plugging these terms back into the recursive formulas (adjusting for the variance $\Sigma^{(l)}(x,x)$ at each layer $l$). 
\end{proof}

To build upon Proposition~\ref{prop:correlated_sigma}, we characterize $\Sigma^{(L)}(x, x')$ and $\Theta_{\infty}^{(L)}(x,x')$ for any correlation between $x$ and $x'$ by introducing a correlation coefficient $\rho^{(L)}$ in Definition~\ref{def:correl_coeff}. These correlations correspond to the correlation between hidden neurons at layer $L$ given inputs $x$ and $x'$.  

\begin{deff}[Correlation coefficient of $\Sigma^{(L)}(x, x')$]
    \label{def:correl_coeff}
    The correlations of neurons from inputs $x$ and $x'$ are defined as
    \begin{equation*}
        \rho^{(L)}(x, x') := \frac{\Sigma^{(L)}(x, x')}{\sqrt{\Sigma^{(L)}(x, x) \Sigma^{(L)}(x', x')}}.
    \end{equation*}
    Note that $\rho^{(L)}(x, x') \in [-1, 1]$.
\end{deff}

In order to compute the kernel $\Theta_{\infty}^{(L)}$ for ReLU activations, one can use Definition~\ref{def:correl_coeff} to compute $\rho^{(L)}$, $\Sigma^{(L)}$, $\dot{\Sigma}^{(L)}$.\footnote{And $\Theta_{\infty}^{(L)}$ using Theorem~\ref{thm:jacot_ntk}.}. In addition, for data points that are outside $S^{n_0-1}$, the homogeneous property of ReLU allows to generalize $\Theta_{\infty}^{(L)}$ to $\mathbb{R}^{n_0}$. These facts are summarized in Proposition~\ref{prop:closedform}.

\begin{prop}[\citet{arora2019exact}\protect\footnotemark]\label{prop:closedform}
    For inputs $x$ and $x'$ with $\rho \in [-1, 1[$, it holds that
    \begin{align*}
        \rho^{(L+1)}(x, x') &= \frac{\sqrt{1 - (\rho^{(L)}(x, x'))^2}}{\pi} + \frac{1}{2} \rho^{(L)}(x,x') \\
        &+ \frac{\rho^{(L)}(x, x') \arcsin{\rho^{(L)}(x, x')}}{\pi} 
        \\
        \dot{\Sigma}^{(L+1)}(x, x') &= \frac{\arcsin{\rho^{(L)}(x, x')}}{2 \pi} + \frac{1}{4}
    \end{align*}
    and 
    \begin{equation*}
        \Theta_{\infty}^{(L)}(x, x') = \lVert x \rVert_2 \lVert x' \rVert_2 \Theta_{\infty}^{(L)}\left(\frac{x}{\lVert x \rVert_2}, \frac{x'}{\lVert x' \rVert_2}\right)
    \end{equation*}
    for a fully-connected neural network with ReLU activation.
\end{prop}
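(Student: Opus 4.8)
The three claims separate cleanly: the recursions for $\rho^{(L+1)}$ and $\dot\Sigma^{(L+1)}$ reduce to two classical two-dimensional Gaussian integrals, and the scaling identity for $\Theta_\infty^{(L)}$ is a homogeneity statement proved by induction using the recursion of Theorem~\ref{thm:jacot_ntk}. To set up the first two, fix nonzero $x,x'$ with $\rho:=\rho^{(L)}(x,x')\in[-1,1[$ and write $a:=\Sigma^{(L)}(x,x)>0$, $b:=\Sigma^{(L)}(x',x')>0$. Under $f\sim\mathcal N(0,\Sigma^{(L)})$ the pair $(f(x),f(x'))$ is a centered bivariate Gaussian with variances $a,b$ and covariance $\Sigma^{(L)}(x,x')=\rho\sqrt{ab}$, so $(f(x),f(x'))\overset{d}{=}(\sqrt a\,U,\sqrt b\,V)$ for a centered unit-variance pair $(U,V)$ of correlation $\rho$. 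Since ReLU is positively homogeneous of degree $1$ and $\dot\sigma$ (the Heaviside step) of degree $0$, Definition~\ref{def:cov} gives $\Sigma^{(L+1)}(x,x')=\sqrt{ab}\,\mathbb E[\sigma(U)\sigma(V)]$ and $\dot\Sigma^{(L+1)}(x,x')=\mathbb E[\dot\sigma(U)\dot\sigma(V)]=\Pr[U>0,\,V>0]$; taking $x=x'$ also yields $\Sigma^{(L+1)}(x,x)=\tfrac12 a$ because $\mathbb E[\sigma(U)^2]=\tfrac12$.

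\emph{The two Gaussian integrals.} For the first I would either quote the degree-one arc-cosine kernel of \citet{cho_kernel_methods} or integrate directly in polar coordinates; either route gives $\mathbb E[\sigma(U)\sigma(V)]=\tfrac1{2\pi}\bigl(\sqrt{1-\rho^2}+\rho(\pi-\arccos\rho)\bigr)$. Rewriting $\arccos\rho=\tfrac\pi2-\arcsin\rho$ turns this into $\tfrac1{2\pi}\bigl(\sqrt{1-\rho^2}+\rho\arcsin\rho\bigr)+\tfrac\rho4$, and dividing by $\sqrt{\Sigma^{(L+1)}(x,x)\Sigma^{(L+1)}(x',x')}=\tfrac12\sqrt{ab}$ produces exactly the stated recursion for $\rho^{(L+1)}$. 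For the second I would use Sheppard's orthant-probability formula $\Pr[U>0,\,V>0]=\tfrac14+\tfrac1{2\pi}\arcsin\rho$, which is the claimed expression for $\dot\Sigma^{(L+1)}$.

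\emph{Homogeneity of $\Theta_\infty^{(L)}$.} I would prove by induction on $l$ that $\Sigma^{(l)}$ and $\Theta_\infty^{(l)}$ are positively homogeneous of degree $1$ in each argument separately while $\dot\Sigma^{(l)}$ is homogeneous of degree $0$. The base case holds since $\Sigma^{(1)}=\Theta_\infty^{(1)}=\tfrac1{n_0}x^\top x'$ is bilinear. For the step, replacing $x$ by $cx$ with $c>0$ multiplies the $x$-variance of $(f(cx),f(x'))$ by $c^2$ and its covariance with $f(x')$ by $c$, hence $(f(cx),f(x'))\overset{d}{=}(c\,f(x),f(x'))$; positive homogeneity of $\sigma$ (resp.\ $\dot\sigma$) then propagates degree $1$ to $\Sigma^{(l+1)}$ (resp.\ degree $0$ to $\dot\Sigma^{(l+1)}$), and the recursion $\Theta_\infty^{(l+1)}=\dot\Sigma^{(l+1)}\Theta_\infty^{(l)}+\Sigma^{(l+1)}$ from Theorem~\ref{thm:jacot_ntk} combines a degree-$0$ factor with a degree-$1$ one and adds a degree-$1$ term, giving degree $1$. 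Applying this with $c=1/\lVert x\rVert_2$ and then $c=1/\lVert x'\rVert_2$ yields the displayed identity.

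\emph{Main obstacle.} The only genuine computation is the pair of bivariate Gaussian integrals; in the arc-cosine one, care is needed with the branch of $\arccos$/$\arcsin$ and with the normalizing constants so that the $\tfrac1\pi$, $\tfrac1{2\pi}$ and $\tfrac12$ factors line up, but this is classical. The reduction step and the homogeneity induction are routine.
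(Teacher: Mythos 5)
Your proposal is correct, and it matches the route the paper implicitly relies on: the paper gives no proof of this proposition, citing \citet{arora2019exact} with the footnote to \citet{cho_kernel_methods}, and your derivation (reduction to a standard bivariate Gaussian pair via positive homogeneity of ReLU, the degree-one arc-cosine integral, Sheppard's orthant formula, and the degree-1/degree-0 homogeneity induction through the recursion of Theorem~\ref{thm:jacot_ntk}) is exactly the standard argument behind those citations. The constants and the $\arccos\rho=\tfrac{\pi}{2}-\arcsin\rho$ rewriting all check out, so no gap.
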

\footnotetext{See also \citet{cho_kernel_methods} for the complete derivation.}

With these results, we achieved our goal of obtaining a closed-form expression for the $ \Theta_{\infty}^{(L)}$ corresponding to an overparametrized (infinite-width), fully-connected ReLU network with no biases. This, in turn, allow us to characterize the output of such neural network, as done in the rest of the section. Indeed, from Proposition~\ref{prop:closedform} and Proposition 2 from~\citet{jacot2018neuralarxiv}, we can immediately observe a few facts regarding the input data:
\begin{enumerate}[label=case~\alph*), align=left]
    \item \label{itm:first} If all datapoints lie on the unit sphere $S^{n_0 - 1}$, the NTK is invertible for $L \geq 2$ (Proposition 2 from~\citet{jacot2018neuralarxiv}).
    \item \label{itm:second} If all datapoints are pairwise not colinear, i.e. $x_i^\top x_j < \lVert x_i \rVert_2 \lVert x_j \rVert_2$ for $i \neq j$, then the NTK is invertible (for $L \geq 2$) since we can project them to different points on the sphere through the canonical projection.
    \item  \label{itm:third} If we map points from $\mathbb{R}^{n_0}$ to the sphere $S^{n_0}$ by embedding them in a space of dimension $n_0 + 1$ and projecting them with the inverse stereographic projection (Definition~\ref{def:inverse_stereographic_projection} in Appendix~\ref{app:otherdefinitions}), the embedding of the datapoints satisfies $\lVert x_i\rVert_2 = 1$ for all $x_i$ in the dataset.
\end{enumerate}

If one of these cases holds, the following proposition provides a closed-form expression for the approximation of the output of a fully-connected neural network.
\begin{prop}[\citet{jacot2018neural}]\label{prop:jacot_f_infty}
    Let $X$ be a dataset of size $n$ (with entries $x_i^\top$) and let $f^*$ and $f_0$, respectively, refer to the learned function and the neural network after the initialization. Further, suppose that the loss is the Euclidean loss. If the limiting kernel $\Theta_{\infty}^{(L)}\left( X X^\top \right)$ is invertible, the output of the neural network converges to
    \begin{equation*}
        f_{\infty}(x) = f_0(x) + \Theta^{(L)}_{\infty}\left( x^\top X^\top \right) \left( \Theta_{\infty}^{(L)} \left( X X^\top \right) \right)^{-1} (y^* - y_0),
    \end{equation*}
    where for $i=1, \dots, n$
    \begin{align*}
        (y^*)_i &=  f^*(x_i), &
        (y_0)_i &= f_0(x_i),
    \end{align*}
    as time $t \to \infty$.
\end{prop}

Motivated by Proposition~\ref{prop:jacot_f_infty}, requiring $\Theta_{\infty}^{(L)} \left( X X^\top \right)$ to be invertible, we identify \textbf{two} regimes of generalization: all data points can lie on either a \textbf{1)} non-compact manifold (i.e. $\mathbb{R}^{n_0})$ or \textbf{2)} a compact manifold (i.e. $S^{n_0-1}$). The compact regime results in a simplifying assumption for the analysis that follows in subsequent sections. We will focus on this regime since without loss of generality (using the homogeneity of ReLU), one can project on $S^{n_0 -1}$ any dataset which contains points in $\mathbb{R}^{n_0}$ that are not pairwise colinear, using the canonical projection. The kernel $\Theta_{\infty}^{(L)} \left( X X^\top \right)$ will thus be invertible. If colinear points exist, an inverse stereographic projection embedding on $S^{n_0}$ will result in an invertible $\Theta_{\infty}^{(L)} \left( X X^\top \right)$.\footnote{In the context of learning the parameters of a neural network, we assume that one first projects onto the sphere and then fixes the projected data during the training phase.}

\section{Feature learning}\label{subsec:featuresNTK}

On the one hand, the NTK is labelled as a ``lazy regime'' since it has been shown that individual weights converge to their initialization values as the width increases~\citep{jacot2018neural,yang2021tensor}. In the infinite-width limit, this is argued to lead to fixed, albeit randomly initialized, weights. As a result, any features extracted---for instance, by taking the neurons at some hidden layer---are inherently random. On the other hand, both the mean-field regime and (more generally) $\mu P$ parameterization have been shown to exhibit feature learning. Therefore, the usefulness or practicality of the NTK has been cast into doubt since it does not appear to show any feature learning ability. In this section, we show that this conclusion requires further discussion.

Concretely, we provide a toy example to demonstrate that vanishing individual neuron variations can still lead to 1) non-vanishing layer variation, and 2) complex modelling at the output neurons. Then, we show that for the ``lazy regime'', we can define features in a meaningful way, where features contain significant statistical correlations between data points, and are subject to evolution throughout the learning process (i.e., not static for each input $x$). Our argument follows experimental results in Table~\ref{tab:feature_deviation} that show that the last hidden layer deviates from its initial value, and we support our observations through Proposition~\ref{prop:features_dynamic}.

\begin{example}
By inspection of Figure~\ref{fig:toy_model}, we can show that defining features as the output of hidden neurons can sometimes fail to capture the underlying dynamics involved in a model's predictions. Note that in the ``lazy regime'' discussed here, the weight matrix acting on layer $l$ is scaled by a factor of $\frac{1}{\sqrt{n_l}}$ for $l \in \{0, 1, \dots , L-1\}$. Note that $n_0=3$ and $n_3=2$, but these values can be arbitrary, as long as they are fixed and only the hidden widths are increased when studying the effect of infinite widths.


\begin{figure}[ht]
    \centering
    \begin{tikzpicture}[
        node/.style={circle, draw, thick, minimum size=0.6cm, inner sep=0pt, fill=white},
        val/.style={draw=none, font=\small, yshift=1pt},
        conn/.style={->, >=stealth, gray!30, line width=0.4pt}
    ]

    \foreach \i in {1,2,3}
        \node[node, label={[val]above:$1$}] (I-\i) at (0, -\i*1.3) {};
    \node[draw=none, font=\small\bfseries] at (0, -5.2) {Input};

    \node[node, label={[val]above:$\frac{1}{n_1^{1+\alpha}}$}] (H1-1) at (2.5, -0.7) {};
    \node[node, label={[val]below:$\frac{1}{n_1^{1+\alpha}}$}] (H1-2) at (2.5, -1.8) {};
    \node[draw=none] (H1-dots) at (2.5, -2.9) {$\vdots$};
    \node[node, label={[val]below:$\frac{1}{n_1^{1+\alpha}}$}] (H1-3) at (2.5, -3.8) {};
    \node[draw=none, font=\small\bfseries] at (2.5, -5.2) {Hidden 1};

    \node[node, label={[val]above:$\frac{1}{n_2^{1+\alpha}}$}] (H2-1) at (5, -0.7) {};
    \node[node, label={[val]below:$\frac{1}{n_2^{1+\alpha}}$}] (H2-2) at (5, -1.8) {};
    \node[draw=none] (H2-dots) at (5, -2.9) {$\vdots$};
    \node[node, label={[val]below:$\frac{1}{n_2^{1+\alpha}}$}] (H2-3) at (5, -3.8) {};
    \node[draw=none, font=\small\bfseries] at (5, -5.2) {Hidden 2};

    \foreach \i in {1,2}
        \node[node] (O-\i) at (7.5, -\i*1.3 - 0.6) {};
    \node[draw=none, font=\small\bfseries] at (7.5, -5.2) {Output};

    \foreach \i in {1,2,3}
        \foreach \j in {1,2,3}
            \draw[conn] (I-\i) -- (H1-\j);

    \foreach \i in {1,2,3}
        \foreach \j in {1,2,3}
            \draw[conn] (H1-\i) -- (H2-\j);

    \foreach \i in {1,2,3}
        \foreach \j in {1,2}
            \draw[conn] (H2-\i) -- (O-\j);

    \end{tikzpicture}
    \caption{Toy model of a deep, arbitrary-width network with identity as the activation function. The values above and below the hidden units represent the initial activations. We suppose that $0 \leq \alpha \ll 1$ and that $n_1 = n_2$.}
    \label{fig:toy_model}
\end{figure}
\end{example}

For simplicity, we assume that $n_1 = n_2$. We further assume that the weight matrices in layer $l$ are scaled as in the ``lazy regime'' with a factor of $\frac{1}{\sqrt{n_{l-1}}}$. In the figure, a wide network with two hidden layers is represented with weights initialized to $\frac{ 1}{ \sqrt{n_{l-1}} n_l^{(2-l)(1 + \alpha)}}$, such that the hidden neurons at layer $l$ have activations $\frac{1}{n_l^{1+ \alpha}}$, where $0 \leq \alpha \ll 1$ and $l \in [2]$. The individual neuron activations converge to their initial values under the lazy dynamics, which themselves converge to $0$. The probability density of the outcome described in Example~\ref{fig:toy_model} does not vanish with increasing widths.\footnote{Note that $\lim_{n_l \to \infty} \left( \exp \left( -\frac{1}{2n_l^{2(1-b_l)(1 + \alpha) + b_l}} \right) \right)^{n_l} > 0$ for $b_l = l - 1$.} In addition, the norm of each hidden layer is equal to $n_l^{-\alpha}$, which converges to $0$ if $\alpha > 0$ and to $1$ if $\alpha = 0$. Nevertheless, the number of connecting paths between the input and the output layer is $n_0 \times n_1 \times n_2 \times n_3$. Therefore, this implies a gradient contribution to an output neuron of the order $n_1^{- 2\alpha}$. This value does not vanish when $\alpha=0$. This behaviour illustrates that, although the gradient contribution per neuron is insignificant, the infinite-width limit can stack these vanishingly small contributions to contribute significantly to the output over time. Note that under a different activation function (such as ReLU), the number of active connections might be smaller than $n_0 \times n_1 \times n_2 \times n_3$.

Example~\ref{fig:toy_model} shows that in theory, the hidden neurons can in some cases fail to capture the complexity of the information useful for a task, but that in aggregate, they lead to complex modelling capabilities. As such, we propose a simple approach to recover features in the ``lazy regime'' for the infinite-width case.

\paragraph{Hidden layer features and the NTK.}

In \citet{yang2021tensor}, the authors use \texttt{word2vec} on the \texttt{text8} dataset to show how the embeddings derived from the infinite-width NTK are useless in another word analogy task (similarity task on embeddings). For this, they show theoretically that for a one-hidden-layer network, the embeddings obtained via the last hidden layer are inevitably random and fixed during training. Their analysis (see Appendix D in \citet{yang2021tensor}) requires the fact that the initial hidden neurons at time $t=0$ are independent for each pair of input $x_1$ and $x_2$. This is the case because tokens in the vocabulary are encoded using a one-hot encoding. Therefore, their dot products are 0 whenever $x_1 \neq x_2$. When compared to the $\mu P$ parameterization, this issue does not arise as the embedding at time $t$ deviates from time $t=0$ at each neuron. They show empirically that the $\mu P$ parameterization will recover a significantly higher than zero accuracy on the word analogy task, while the NTK has an accuracy very close to zero since the features are completely random. 

Because the analysis of \citet{yang2021tensor} is applied to a one hidden layer network, the hidden neurons at time $t=0$ are independent and random, and this is a key reason why the performance on the embedding task is so poor. Since the hidden neurons remain constant throughout training, they remain arbitrary and thus, no statistical patterns or correlations are extracted between similar inputs. However, the same conclusion cannot be drawn directly if more than one hidden layer is used in conjunction with a non-linear activation. In the infinite-width limit, two scenarios could occur: 1) the last hidden layer could deviate in aggregate from its initialization even if its components converge with increasing width, or 2) the embeddings can remain random and constant throughout learning as in the one hidden layer case. 

In order to empirically validate that the ``lazy regime'' can lead to dynamically evolving features, we train a two-layer ReLU network on a subset of the \texttt{text8} dataset. Each token uses a one-hot encoding and the last hidden layer is taken as the features $h_t$ at time $t \in \mathbb{N}$ (i.e., $t$ is the update step counter). Unlike \citet{yang2021tensor}, who obtain static features that do not evolve over time as the width of the hidden layers goes to infinity, we evaluate a two-layer ReLU network to measure feature evolution.  Table~\ref{tab:feature_deviation} presents the deviation of the last hidden layer from its initialization at widths of 10,000 and 30,000. All hidden layer widths are taken to be equal in these scenarios. 

\begin{table}[ht]
    \centering
    \begin{tabular}{c|c|c}
        \toprule
         \textbf{Measure} &  \multicolumn{2}{c}{\textbf{Mean (s.d.)} } \\
          & width = 10K & width = 30K \\
         \hline
         $\lVert h_t - h_0 \rVert_2$ & $4.8989$ $(1.0250)$ & $10.5614$ ($2.3419$)\\
         $\max_i \Vert h_{t,i} - h_{0,i} \rVert_{\infty}$ & $0.0955$ ($0.0189$) & $0.1375$ ($0.0286$)\\
         $\frac{\lVert h_t - h_0 \rVert_2}{\max_i \Vert h_{t,i} - h_{0,i} \rVert_{\infty}}$ & $51.1774$ ($0.7959)$ & $76.5406$ ($1.5052$)\\
         \bottomrule
    \end{tabular}
    \caption{Effect of training on the last hidden layer of a ReLU network. The notation $h_t$ refers to the last hidden layer at update step $t \in \mathbb{N}$. We average over inputs in the dataset. }
    \label{tab:feature_deviation}
\end{table}

From Table~\ref{tab:feature_deviation}, on the one hand, we observe that the individual neuron deviations are close to their initial value (second row values) and this seems to align with the results in \cite{Chizatetal2019, yang2021tensor}. On the other hand, if we instead compare with the total aggregate deviation of the full hidden layer (first row), we can see that it does not converge to $0$. In \citet{jacot2018neural}, the authors crucially remarked that even if individual neurons deviations became small with increasing width, their total contribution is non-negligible to the output of the model. This at first glance seems to be the case for the last hidden layer as demonstrated in Table~\ref{tab:feature_deviation}. Hence, we are motivated to understand these apparently paradoxical observations.

Both \citet{Chizatetal2019} and \citet{yang2021tensor} rely on significant yet subtle assumptions. \citet{Chizatetal2019} show that the activations of hidden neurons in a 2-layer neural network converge to their initial values as the loss function and model output are scaled by an ever smaller constant. This constant does not, however, describe the role that the $\frac{1}{\sqrt{n_{L-1}}}$ factor plays in the output layer of the neural network; in their paper, it is a free parameter that is independent of $\frac{1}{\sqrt{n_{L-1}}}$, while in the setting discussed in this paper, the factor $\frac{1}{\sqrt{n_{L-1}}}$ is tied to the width of the last hidden layer. \citet{yang2021tensor} also provide experimental details that show no useful features are learned by an infinite-width NTK operating in the ``lazy regime''. As mentioned previously, both their theoretical results and their supporting experiments rely on the use of identity activation functions across layers. Identity activations imply that the neural network is a linear function of the input, which cannot capture complex tasks, and enforce that the output is a Gaussian process. In their \texttt{word2vec} task, the one-hot encoding of words $i$ and $j$ implies that $e_i^\top e_j = 0$, when $i \neq j$. In this setting, the output neurons for input $e_i$ will be independent of the output neurons for input $e_j$. This is the key reason that features are completely random and uncorrelated, i.e., do not capture any statistical dependencies. Introducing ReLU activation functions, and having at least 2 hidden layers is sufficient to break the implicit conditions in \citet{yang2021tensor}.\footnote{Compare with Proposition~\ref{prop:closedform} to see that with $x = e_i$ and $x'=e_j$, $i \neq j$, we obtain correlated neurons at hidden layer $2$. This holds even for $x^\top x' = 0$.} The phenomenon described in Table~\ref{tab:feature_deviation} was also observed in \citet{seleznova2022analyzing} for ReLU networks under various hyperparameter initialization schemes falling in the ``lazy regime'', albeit for smaller widths.  

The stated goal of representation learning is to capture statistical dependencies in hidden layers. Next, we detail how this can be done in the ``lazy regime''. \citet{jacot2018neural} describe the evolution of the last hidden layer $h_t$ as

\begin{equation*}
    \partial_t h_{t,i}(x)^\top = \Theta_s^{(L-1)} \left( x^\top X^\top \right) d_{t,i} \left( X \right),
\end{equation*}
where $d_t(x) \in \mathbb{R}^{n_{L-1}}$ is the derivative 
\begin{equation*}
    d_{t}(x)^\top  = \frac{1}{\sqrt{n_{L-1}}} \left( \frac{\partial}{\partial f_t}\mathcal{L}(f_t(x), f^*(x))  \right)^\top W_t,
\end{equation*}
for any loss function $\mathcal{L}: \mathbb{R}^{n_L} \times \mathbb{R}^{n_L} \to \mathbb{R}_{\geq 0}$. If we evaluate the inner product $\langle h_t(x), h_t(x') \rangle$ for $x \neq x'$, we obtain the following:

\begin{prop}\label{prop:features_dynamic}
    Let $\mathcal{L}(f_t(x), f^*(x)) = \lVert f_t(x) - f^*(x) \rVert_2$, i.e., the Euclidean loss. The inner product $\langle h_t(x), h_t(x') \rangle$ is a strictly non-constant function of the training step $t$. Moreover, as $n_{L-1} \to \infty$, the inner product $\langle h_t(x), h_t(x') \rangle$ is not constant with probability 1.
\end{prop}

\begin{proof}
First, let us establish some notation. We denote the standard inner product with $\langle \cdot, \cdot \rangle$. As in Definition~\ref{def:ntk}, the notation $\Theta_s^{(L-1)}$ refers to the NTK up to layer $L-1$ at time $s \geq 0$. For a vector-valued function $g(x) \in \mathbb{R}^m$, the notation $g \left( X \right) \in \mathbb{R}^{n \times m}$ stacks the vectors into a matrix along rows, where each row corresponds to elements of $X$. Subscripts of the form $g_{t,i}$ for a time-dependent vector-valued function $g$, refer the the $i^\text{th}$ component of $g_t$. Remember that $\lim_{n_{L-1} \to +\infty} \Theta_s^{(L-1)} = \Theta_{\infty}^{(L-1)}$ (see Theorem~\ref{thm:jacot_ntk}).  

By expressing $h_t(x)= h_0(x)+ \int_0^t \partial_s h_s(x) ds$, we obtain the following decomposition for $\langle h_t(x), h_t(x') \rangle$:
\begin{align}
    &\phantom{=}
    \langle h_0(x), h_0(x')\rangle + \left\langle h_0(x), \int_0^t \partial_s h_s(x') \, ds \right\rangle + \left\langle \int_0^t \partial_s h_s(x) \, ds , h_0(x') \right\rangle + \left\langle \int_0^t \partial_s h_s(x) \, ds , \int_0^t \partial_s h_s(x') \, ds \right\rangle \nonumber \\
    &= \Sigma^{(L-1)}(x, x') + \sum_{i=1}^{n_{L-1}} \int_0^t h_{0,i}(x) d_{s,i}\left( X \right)^\top \Theta_s^{(L-1)} \left( X x' \right) \, ds \nonumber \\
    &+ \sum_{i=1}^{n_{L-1}} \int_0^t h_{0,i}(x') d_{s,i}\left( X \right)^\top \Theta_s^{(L-1)} \left( X x \right) \, ds \label{eq:expansion of dot product}\\
    &+ \sum_{i=1}^{n_{L-1}} \int_0^t  \Theta_s^{(L-1)} \left( x^\top X^\top \right) d_{s,i} \left( X \right) d_{s,i}\left( X \right)^\top \Theta_s^{(L-1)} \left( X x' \right) \, ds.\nonumber
\end{align}

The matrix $d_{s,i}\left( X \right) d_{s,i} \left( X \right)^\top$ is positive-semidefinite. Therefore, any integrand in the last sum of the expansion of $\langle h_t(x), h_t(x') \rangle$ is zero only when $d_{s,i}\left( X \right)^\top \Theta_s^{(L-1)} \left( X x' \right) = 0$ or $\Theta_s^{(L-1)} \left( x^\top X^\top \right) d_{s,i}\left( X \right) = 0$ for $s \in [0, t]$. If the probability of orthogonality is zero, the integral describes variations in the statistical correlation between the inputs.

When analyzing the infinite-width behaviour, \citet{jacot2018neuralarxiv} show that the matrix $W_t$ of weights at time $t$ converges to its initialization $W_0$, and $h_{t,i}$ converges to $h_{0,i}$ for any $t \geq 0$ and $i \in [n_{L-1}]$ (see the proof of Theorem 2 in Appendix A.2). Specifically, under the Euclidean loss $\mathcal{L}( f_t(x), f^*(x)) = \lVert f_t(x) - f^*(x) \rVert_2$, we obtain in the infinite-width limit that
\begin{align*}
    \sqrt{n_{L-1}} d_{s,i} \left( X \right)^\top \Theta_{\infty}^{(L-1)} \left( X x' \right) &= \left( W_0^\top \right)_{i, \cdot} \left( f_s\left( X \right) - f^* \left( X \right) \right)^\top \Theta_{\infty}^{(L-1)} \left( X x' \right) \\
    \sqrt{n_{L-1}} \Theta_{\infty}^{(L-1)} \left( x^\top X^\top \right) d_{s,i} \left( X \right) &= \Theta_{\infty}^{(L-1)} \left( x^\top X^\top \right) \left( f_s\left( X \right) - f^* \left( X \right) \right) \left( W_0 \right)_{\cdot, i},
\end{align*}
from which we can deduce that with probability 1, these two terms will not be zero\footnote{We use the closed-from solution derived in \citet{jacot2018neuralarxiv} for the Euclidean loss.}. Using a simple union bound argument, both terms will be nonzero with probability 1. In the second term of expression~\eqref{eq:expansion of dot product}, we get
\begin{align*}
     d_{s,i} \left( X \right)^\top \Theta_s^{(L-1)} \left( X x' \right) &= \frac{1}{\sqrt{n_{L-1}}} \left( W_s^\top \right)_{i,\cdot} \left( f_s\left( X \right) - f^*\left( X \right) \right)^\top \Theta_{s}^{(L-1)} \left( X x' \right) \\
    &= \frac{1}{\sqrt{n_{L-1}}} \left( W_s^\top \right)_{i, \cdot} \left( \frac{1}{\sqrt{n_{L-1}}} W_s h_s \left( X \right) - f^* \left( X\right) \right)^\top \Theta_{s}^{(L-1)} \left( X x' \right) \\
    &\to h_{s,i} \left( X \right)^\top \Theta_{\infty}^{(L-1)} \left( X x' \right) \\
    &= h_{0,i} \left( X \right)^\top \Theta_{\infty}^{(L-1)} \left( X x' \right),
\end{align*}
where we used the law of large numbers in the third step and the fact that neuron deviations converge to $0$ as the width increases. Similarly, the integrands in the third term of expression~\eqref{eq:expansion of dot product} converge:
\begin{align*}
     d_{s,i} \left( X \right)^\top \Theta_s^{(L-1)} \left( Xx \right)  \to h_{0,i} \left( X \right)^\top \Theta_{\infty}^{(L-1)} \left( Xx \right)  .
\end{align*} 
Both the second and third term in the expansion of~\eqref{eq:expansion of dot product} will remain constant. We conclude that in the limit $n_{L-1} \to \infty$, the inner product $\langle h_t(x), h_t(x') \rangle$ will vary over time and that this change depends on the terms $\Theta_{\infty}^{(L-1)} \left( x^\top X^\top \right)$ and $\Theta_{\infty}^{(L-1)} \left( X x' \right)$. For a fixed, finite layer width $n_{L-1}$, convergence in probability ensures that the empirical trajectory concentrates tightly around its deterministic, infinite-width counterpart. Consequently, for a sufficiently large choice of $n_{L-1}$, the feature inner product remains strictly non-constant with arbitrarily high probability. This regularizing behavior can be formalized explicitly using standard $\epsilon$-$\delta$ analytical arguments or non-asymptotic concentration inequalities for Gaussian quadratic forms.
\end{proof}

Driven by both the experimental results of Table~\ref{tab:feature_deviation} and Proposition~\ref{prop:features_dynamic}, it is possible to study features learned by a model even in an infinite-width limit. The crucial distinction between our setting and prior work on infinite-width (e.g. \citet{jacot2018neural}) is that we study the layer norm variation $h_t$ when we have a finite dimension $n_{L-1}$ for the last hidden layer, while the widths of the other hidden layers are taken to infinity. As shown previously, $\lVert h_t - h_0 \rVert_2$ will evolve over time, while the terms $\langle h_t(x), h_t(x') \rangle$ will vary at rates that depend on $x, x'$. Ultimately, having a high correlation between $\Theta_{\infty}^{(L-1)} \left( X x \right)$ and $\Theta_{\infty}^{(L-1)} \left( X x' \right)$ drives a variation in statistical dependency relevant to the task , e.g., \texttt{word2vec} with one-hot encoding. 

Ultimately, shifting our perspective to aggregate layer dynamics shows that infinite width does not imply static features. Having established that the NTK framework remains a rich and dynamic proxy for overparameterized models 
along the width axis, we now turn our attention to the orthogonal challenge of scaling: the behavior of these kernels in the deep network limit.

\section{Limiting kernel as depth increases}\label{sec:limiting_kernel}

While we have considered so far a fixed depth $L$ in the infinite-width limit, we have not yet addressed the effect of increasing the depth and the width. Such insights into the additional effect of depth would provide a tangible frontier for the representation power of fully-connected neural networks and their generalization capabilities. As alluded to earlier in Section~\ref{subsec:gaps}, with large depths, $f_t$ becomes data-independent outside of the dataset $X$. This situation fails to demonstrate any generalization capabilities. On the other hand, if we fix $L$, as the dataset size $n$ becomes large, the smallest eigenvalue of the NTK will become arbitrarily close to zero, leading to highly unstable predictions as $t \to + \infty$. In this section, we show that the prediction function $f_t$ from Proposition~\ref{prop:jacot_f_infty}, the projection operator, can be made asymptotically stable and expressive. In other words, when the projection operator is applied to $f_0(X) - f^*(X)$ with a properly chosen stopping time $t$ and depth $L$, the result is not too far from NTK regression with the stable kernel $\frac{1}{4} \mathbf{1}_n \mathbf{1}_n^\top + \frac{3}{4} I_n$, while it also retains the ability to distinguish data points not in the dataset. Note that this setting is different from most works studying the NTK of infinitely-wide neural networks, where $t=\infty$ is used. 

In this section, we reiterate that we always assume ReLU activations for $\Theta^{(L)}_{\infty}$ and $\Sigma^{(L)}$. For a concise list that summarizes the assumptions made in this section, we refer the reader to Appendix~\ref{sec:assumptions}. We start by stating the main results of this section and then, we present their proof. As a warm-up, let us introduce the definition of normalized kernel.

\begin{deff}[Normalization of the $\Theta_{\infty}^{(L)}$ kernel]\label{def:bar_theta}
    For $x, x' \in S^{n_0-1}$, the normalized version of $\Theta_{\infty}^{(L)}$ is defined by
    \begin{equation*}
        \bar{\Theta}_{\infty}^{(L)}(x, x') = \frac{n_0 2^{L-1} \Theta_{\infty}^{(L)}(x, x')}{L}.
    \end{equation*}
\end{deff}

Definition~\ref{def:bar_theta} normalizes the kernel $\Theta_{\infty}^{(L)}$ to  values in $[-1,1]$. The maximum value of $1$ is achieved on the diagonal, i.e., at $\Theta_{\infty}^{(L)}(x,x)$ for $x \in S^{n_0 -1}$. Theorem~\ref{thm:convergence_theta_bar} below states that the limiting NTK of infinitely-wide fully-connected networks converges to the matrix $\frac{1}{4} \mathbf{1}_n \mathbf{1}_n^\top + \frac{3}{4} I_n$ after normalization (see Definition~\ref{def:bar_theta}).

\begin{restatable}[Convergence of $\bar{\Theta}_{\infty}^{(L)}$]{thm}{convthetabar}\label{thm:convergence_theta_bar}
    For any $x, x' \in S^{n_0 -1}$ such that $x \neq x'$, the value $\bar{\Theta}_{\infty}^{(L)}(x, x')$ converges to $\frac{1}{4}$ at a rate $O\left( \frac{1}{L}\right)$ as $L \to \infty$. 
\end{restatable}

 \begin{cor}\label{cor:convergence_theta_bar}
     To observe the $O\left( \frac{1}{L} \right)$ decay rate from Theorem~\ref{thm:convergence_theta_bar}, the depth $L$ should be much larger than $\frac{1}{\sqrt{1- \rho^{(2)}(x^\top x')}}$.
 \end{cor}

In other words, for every pair of inputs $x, x' \in S^{n_0 -1 }$ such that $x \neq x'$, the evaluation $\bar{\Theta}_{\infty}^{(L)} (x^\top x')$ converges to $\frac{1}{4}$, which a priori seems to imply that the predictions of neural regression will converge to the same value for both $x$ and $x'$. For this reason, Theorem~\ref{thm:convergence_theta_bar} can be taken to be a major obstacle to the analysis of $f_t$. However, as Theorem~\ref{thm:deep_stable_ntk} presented below states, for a fixed $x$, we can maintain the expressivity of $f_t$ while increasing the depth $L$, if we properly scale it together with the early stopping $t$ and the dataset size $n$. A finite stopping time ensures that the smallest eigenvalue $\lambda_n$ of $\bar{\Theta}_{\infty}^{(L)} \left( X X^\top \right)$ does not blow up the norm of the predictor. Meanwhile, a proper choice of $L$ ensures that for points not in the dataset, yet in a small neighbourhood of a particular training example, the mapping $f_t$ retains it expressivity. As we collect more data, it is expected that most data points in $S^{n_0 -1}$ will fall close to a data point in $X$. We illustrate the convergence of $\Theta_{\infty}^{(L)}(x,x')$ for $x \neq x'$ in Figure~\ref{fig:stability_off_diagonal}.

 \begin{figure}[htbp]
     \centering
     \begin{minipage}[b]{0.49\textwidth}
         \includegraphics[width=\textwidth]{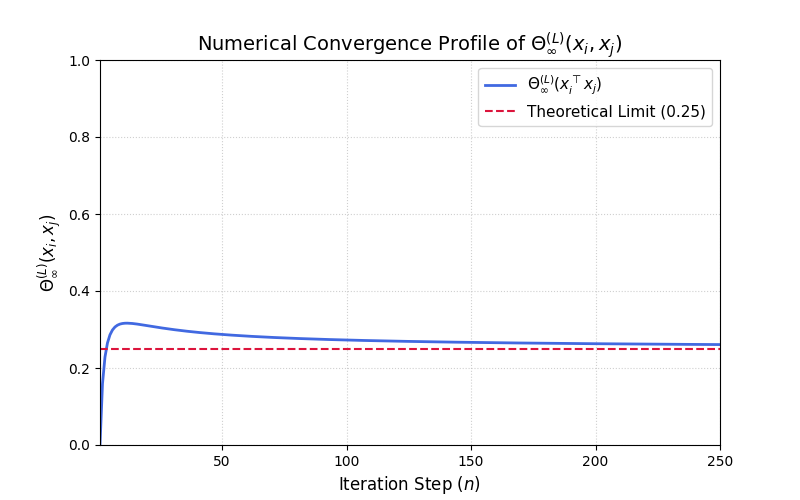}
         \centering
         a) $\rho^{(1)}(x^\top x') = 0$
     \end{minipage}
     \hfill
     \begin{minipage}[b]{0.49\textwidth}
         \includegraphics[width=\textwidth]{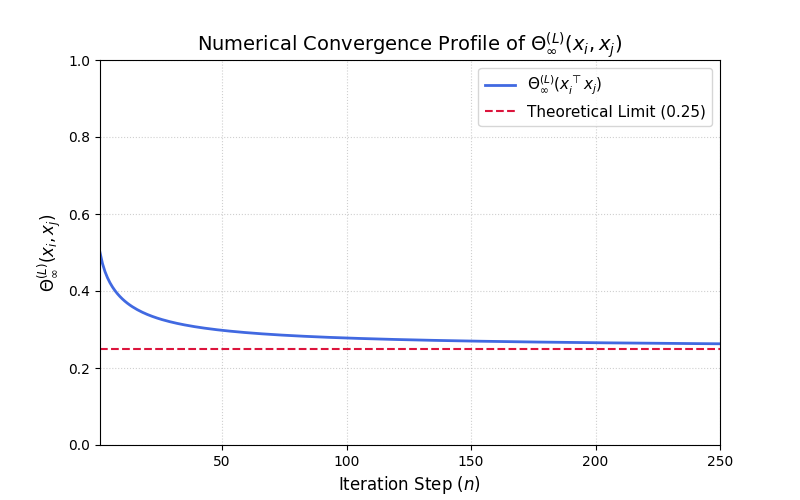}
         \centering
         b) $\rho^{(1)}(x^\top x') = 0.5$
     \end{minipage} \\
     \begin{minipage}[b]{0.49\textwidth}
         \includegraphics[width=\textwidth]{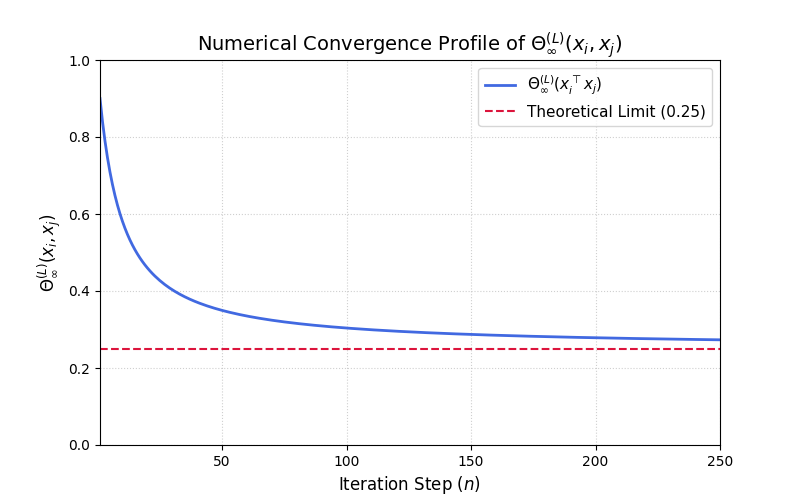}
         \centering
         c) $\rho^{(1)}(x^\top x') = 0.9$
     \end{minipage}
     \hfill
     \begin{minipage}[b]{0.49\textwidth}
         \includegraphics[width=\textwidth]{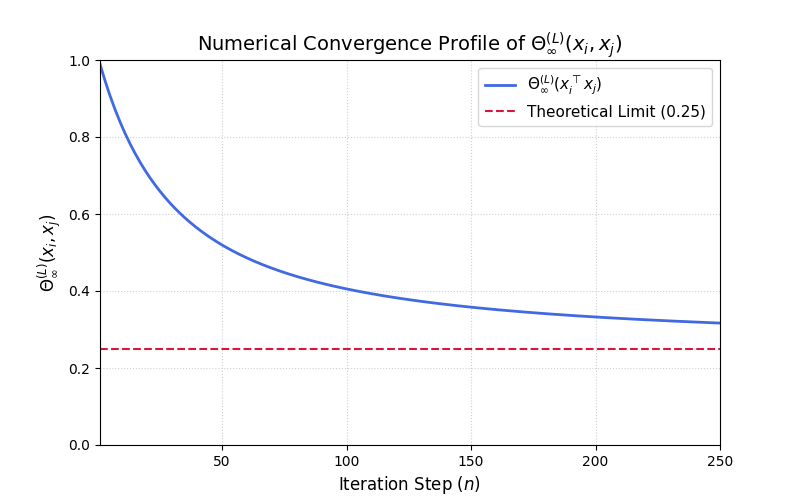}
         \centering
         d) $\rho^{(1)}(x^\top x') = 0.99$
     \end{minipage}
     \caption{Convergence of $\Theta_{\infty}^{(L)}(x, x')$ (blue line) to $\frac{1}{4}$ (red line) with varying initial values of $\rho^{(1)}(x^\top x')$.}
     \label{fig:stability_off_diagonal}
 \end{figure}

\begin{thm}\label{thm:deep_stable_ntk}
In an infinitely-wide fully-connected ReLU network of depth $L$, for any evaluation point $x \in S^{n_0-1}$ satisfying $x \neq \pm x_i$ for all $i \in [n]$, the vector $\frac{1}{4} \mathbf{1}_n - \bar{\Theta}_{\infty}^{(L)} \left( X x \right) \in \mathcal{O}\left(\frac{1}{L} \right)$ component-wise. Moreover, as we increase $n$ and keep $n_0$ fixed, two possible scenarios arise:
\begin{enumerate}[label=\alph*)]
    \item There exists $0 < \delta < 1$ such that for any $n$, $x_i^\top x_j \leq 1 - \delta$, where $i,j \in [n]$ and $i \neq j$,
    \item $\lim_{n \to \infty} \max_{i,j \in [n] \mid i \neq j} x_i^\top x_j = 1$.
\end{enumerate}

In case a), $\lVert P^{(L)}_{S}  v - P_{S}v \rVert_2 \in O \left( \frac{1}{L} \right)$ for any vector $v$ and $S \in \{\{1\}, \{2, \dots, n\}\}$, where $P_S^{(L)}$ (resp. $P_S$) projects onto the span of the eigenvectors of $\bar{\Theta}_\infty^{(L)} \left( X X^\top \right)$ (resp. $\frac{1}{4} \mathbf{1}_n \mathbf{1}_n^\top + \frac{3}{4} I_n$ indexed by $S$. For any $t$, $\lVert f_t(x) - \hat{f}_t(x) \rVert_2 \in O(n)$ for $x \in S^{n_0-1}$, where $\hat{f}_t$ is defined analogously to $f_t$, with its NTK given by $\frac{1}{4} \mathbf{1}_n \mathbf{1}_n^\top + \frac{3}{4} I_n$.

In case b), performing early stopping with $t$ will stabilize the projection on eigenvectors of $\bar{\Theta}_{\infty}^{(L)} \left(X X^\top \right)$. Specifically, by setting $t \in o \left( \frac{1}{\lambda_n} \right)$, where $\lambda_n$ is the smallest eigenvalue of $\bar{\Theta}_\infty^{(L)} \left( X X^\top \right)$, the norm of the projection operator (see Section~\ref{sec:auxiliary_results}) remains in $O \left( n \right)$.

In addition to the bounds in a) and b), for each $x \in S^{n_0 -1}$, the projection operator induced by $x$ is non-trivial whenever $L \in O\left( \frac{1}{\sqrt{1- \rho^{(2)} \left( x_i^\top x_j \right)}} \right)$ for some $x_i, x_j \in X$ with $x_i \neq x_j$. 
\end{thm}

Theorem~\ref{thm:deep_stable_ntk} demonstrates that in deep ReLU networks, predictions can still be stable (neither vanishing nor exploding) provided the experimenter calibrates early stopping and depth.  Practitioners can tune these hyperparameters depending on the level of representation variance desired for a specific task. In practice, the depth $L$ is much smaller than the dataset size $n$. The experimenter has the choice to use the hyperparameters $n, t, L$ to ensure stability of the model's predictions in the ``lazy regime.''~\footnote{Varying the hyperparameter $n$ depends one the experimenter's ability to obtain new data points.} By the Davis-Kahan theorem, the subspace spanned by the $n-1$ eigenvectors associated with the smallest eigenvalues of $\bar{\Theta}_{\infty}^{(L)} \left( X X^\top \right)$ concentrates tightly around the subspace spanned by the $n-1$ eigenvectors of $\mathbf{1}_n \mathbf{1}_n^\top$ with smallest eigenvalue (i.e. $\frac{3}{4}$). In fact, the theorem provides a bound on the sine of the angle $\theta$ between the two subspaces: $\sin(\theta) \leq \frac{1}{n} \lVert \mathbf{1}_n \mathbf{1}_n^\top - \bar{\Theta}_{\infty}^{(L)} \left( X X^\top \right) \rVert_F$. Similarly, the eigenvectors with largest eigenvalues can be shown to be very close. It is therefore possible to approximate $f_t(x)$ for any $x$ uncorrelated to $X$ (i.e. $\lVert Xx\rVert_{\infty} \approx 0$) by applying the eigendcomposition of $\mathbf{1}_n \mathbf{1}_n^\top$ directly; this idea uses the fact that for large $L$, the largest eigenvalue is mapped to $0$, while the other eigenvalues are approximately mapped to $-t$ by the exponential map. Therefore $f_t$'s behaviour interpolates between two regimes: the first is described by the trivial predictor $\hat{f}_t$ for data uncorrelated to $X$, and the second is described by the non-trivial part of $f_t$ on data correlated with $X$. As we collect more data points and $n$ increases, individual points on $S^{n_0 -1}$ become more correlated with $X$ and the first regime becomes less likely, in which case it is possible to safely increase the depth $L$ via the bound in Theorem~\ref{thm:deep_stable_ntk}.

To fully contextualize the implications of this stability result, we next discuss (i) our circumvention of data-independence relative to prior literature, (ii) our extension to non-compact domains, and (iii) the applications of our analysis to mean-field particle limits and alternative architectures.

First, while $\bar{\Theta}_{\infty}^{(L)} \left( X X^\top \right)$ approaches an invertible matrix, it becomes trivial for $x \notin X$ when $L \to \infty$. Nevertheless, our proof shows that $f_t$ can remain expressive. This distinguishes our setting from that of \citet{xiao2020disentangling}, where $t \to \infty$, and $f_t$ is data-independent. \citet{li2024eigenvalue} provide a similar result describing the importance of early stopping, yet their work does not specifically analyze the role of depth. Furthermore, their work requires the assumption that the dataset is generated from a probability distribution admitting a density function, which is not the case for Theorem~\ref{thm:deep_stable_ntk}.

Second, for ReLU networks, it is possible to easily extend this stability guarantee to the non-compact regime (i.e., general domain $\mathbb{R}^{n_0}$). By Proposition~\ref{prop:closedform}, there is a closed-form to $\Theta_{\infty}^{(L)}$ for general data points in $\mathbb{R}^{n_0}$. In the statement of the proposition, the canonical projection on the sphere is provided, but a similar result is obtained for a stereographic projection.

Finally, while works such \citet{bietti2021deep} and \citet{li2024eigenvalue} show that the representation power of $\Theta^{(L)}_{\infty}$ does not change as $L \to \infty$, applying our scaling result to the mean-field regime of \citet{chizat2018global}, we find that each particle approaches the deterministic limit by inspection of Proposition~\ref{prop:jacot_f_infty}. It is therefore possible to analyze the many-particle limit of very wide and deep fully-connected neural networks since these are well approximated by $f_\tau$ for a proper stopping time $\tau$~\citep{li2024eigenvalue}.\footnote{The particles are given by different initializations $f_0(x), f(X)$.} In addition, the analysis of Theorem~\ref{thm:deep_stable_ntk} can be adapted to other kernels that arise from other architectures such as CNNs.

In order to prove both theorems, we provide technical results and definitions in what follows. Lemma~\ref{lem:convergence_rho}, Definition~\ref{def:hfun} and Proposition~\ref{prop:theta_recursive} form the building blocks of Theorem~\ref{thm:convergence_theta_bar}, while Proposition~\ref{prop:matrix_exp} is essential to proving Theorem~\ref{thm:deep_stable_ntk}. Proposition~\ref{prop:unfiform_close} and Example~\ref{ex:deep_stable} provide a specific application to a dataset uniformly sampled on $S^{n_0 -1}$ and shows that with high probability, data points will satisfy a separation criterion which allows us to apply Theorem~\ref{thm:deep_stable_ntk} part a) to choose an appropriate depth $L$.

\subsection{Auxiliary results and definitions}\label{sec:auxiliary_results}
The proofs of the results provided here can be found in Appendix~\ref{sec:additional_lemmas}. The following lemma establishes that $\rho$ converges to 1 for each pair of data points as $L$ goes to infinity. This result is a key ingredient in the propositions and theorems that follow.

\begin{restatable}[Convergence of $\rho^{(L)}$]{lem}{lemmaappendix}\label{lem:convergence_rho}
    If $\rho^{(1)}(x, x') \in \left]-1, 1\right[$, then $\rho^{(L)}(x, x') \to 1$ as $L \to \infty$.
\end{restatable}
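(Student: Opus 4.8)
The plan is to reduce the claim to a one-dimensional monotone-convergence argument for the scalar recursion satisfied by $\rho^{(L)}(x,x')$. By Proposition~\ref{prop:closedform}, whenever $\rho^{(L)}(x,x') \in \left[-1,1\right[$ we have $\rho^{(L+1)}(x,x') = g\bigl(\rho^{(L)}(x,x')\bigr)$, where
\[
  g(\rho) := \frac{\sqrt{1-\rho^2}}{\pi} + \frac{\rho \arcsin \rho}{\pi} + \frac{\rho}{2}.
\]
First I would record the elementary properties of $g$ on $[-1,1]$: it is continuous (the apparent singularities at $\rho=\pm 1$ are removable), $g(-1) = \tfrac12$, and $g(1) = \tfrac12 + \tfrac12 = 1$, so $1$ is a fixed point of $g$. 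Differentiating and cancelling the two terms that blow up at the endpoints gives the clean expression $g'(\rho) = \tfrac{1}{\pi}\arcsin\rho + \tfrac12$, which lies in $[0,1]$ on $[-1,1]$ and is strictly positive on $\left]-1,1\right]$; hence $g$ is nondecreasing on $[-1,1]$ and strictly increasing on $\left]-1,1\right]$. Consequently $g$ maps $\left[-1,1\right[$ into $\left[\tfrac12,1\right[$, so the recursion is well defined for all $L$ once $\rho^{(1)}(x,x') \in \left[-1,1\right[$, and the iterates never reach $1$.

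The key input is the strict inequality $g(\rho) > \rho$ for every $\rho \in \left[-1,1\right[$. I would prove it by setting $h(\rho) := g(\rho) - \rho$, noting $h(1) = 0$ and $h'(\rho) = g'(\rho) - 1 = \tfrac{1}{\pi}\arcsin\rho - \tfrac12 < 0$ for $\rho \in \left[-1,1\right[$, and integrating from $\rho$ to $1$ to conclude $h(\rho) > 0$. Equivalently, this shows that $1$ is the unique fixed point of $g$ in $[-1,1]$.

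Putting the pieces together: starting from $\rho^{(1)}(x,x') \in \left]-1,1\right[$, an induction using that $g$ is increasing and satisfies $g(\rho) > \rho$ shows that $\bigl(\rho^{(L)}(x,x')\bigr)_{L \ge 1}$ is strictly increasing and remains in $\left]-1,1\right[$, hence is bounded above by $1$. It therefore converges to a limit $\rho^\star \in \left]\rho^{(1)}(x,x'),\, 1\right]$, and passing to the limit in $\rho^{(L+1)}(x,x') = g\bigl(\rho^{(L)}(x,x')\bigr)$ using continuity of $g$ yields $\rho^\star = g(\rho^\star)$; by uniqueness of the fixed point, $\rho^\star = 1$, which is the claim.

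The only mildly delicate point I anticipate is the derivative computation near $\rho = \pm 1$: one must check that the term $-\rho/\sqrt{1-\rho^2}$ produced by differentiating $\sqrt{1-\rho^2}$ exactly cancels the term $+\rho/\sqrt{1-\rho^2}$ produced by differentiating $\rho\arcsin\rho$, leaving the bounded expression $\tfrac{1}{\pi}\arcsin\rho + \tfrac12$, and that $g$ and $g'$ extend continuously to the closed interval. Everything else is routine; if one prefers to avoid the endpoint analysis altogether, the monotonicity of $g$ and the inequality $g(\rho) > \rho$ need only be verified on the open interval $\left]-1,1\right[$, which suffices because the iterates never leave it.
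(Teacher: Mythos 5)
Your proof is correct, but it reaches the conclusion by a different mechanism than the paper's. Both arguments start from the same scalar recursion $\rho^{(L+1)}(x,x') = h\bigl(\rho^{(L)}(x,x')\bigr)$ with $h(z) = \frac{z\arcsin z}{\pi} + \frac{\sqrt{1-z^2}}{\pi} + \frac{z}{2}$ (your $g$ is the paper's $h$ from Definition~\ref{def:hfun}; note that your auxiliary function $h := g - \mathrm{id}$ clashes with that notation). The paper establishes convergence via a quasi-contraction argument: it notes $h' < 1$ on compact subintervals avoiding $1$, argues that iteration strictly decreases distances and invokes compactness to get convergence to a unique fixed point $\beta$, and only then uses $h(z) \geq z$ (strict on $[-1,1[$) to identify $\beta = 1$. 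You instead use $g(\rho) > \rho$ on $[-1,1[$ together with monotonicity of $g$ to show that the sequence $\bigl(\rho^{(L)}\bigr)_L$ is strictly increasing and bounded above by $1$, hence convergent, and then continuity of $g$ plus the absence of fixed points below $1$ pins the limit to $1$. Your monotone-convergence route is more elementary and sidesteps the delicate points in the paper's sketch (the interval $]-1,1[$ is not compact and $h'(z) \to 1$ as $z \to 1$, so the uniform fixed-point claim there needs care), whereas the paper's contraction-style viewpoint is the one reused for the non-monotone kernel iteration in Proposition~\ref{prop:convergence_eta}, where a $g > \mathrm{id}$ inequality is not available. One small slip in your write-up: $g(-1) = 0$, not $\tfrac12$ (at $\rho = -1$ the term $\rho\arcsin\rho/\pi$ equals $\tfrac12$ and cancels against $\rho/2 = -\tfrac12$), so $g$ maps $[-1,1[$ into $[0,1[$ rather than $[\tfrac12,1[$; this is harmless, since all your argument needs is that the iterates remain in $]-1,1[$ and strictly below $1$.
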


Below, we define a function that  will help us to rewrite the expression $\bar{\Theta}_{\infty}^{(L)} \left( X X^\top \right)$ obtained in Theorem~\ref{thm:convergence_theta_bar} in a convenient form.

\begin{deff}\label{def:hfun}
We define the function $h: [-1, 1] \to \mathbb{R}$ as
\begin{equation*}
        h(z) = \frac{z \arcsin(z)}{\pi} + \frac{\sqrt{1 - z^2}}{\pi} + \frac{z}{2}.
    \end{equation*}
\end{deff}

\begin{figure}[ht]
    \centering
    \includegraphics[width=0.85\linewidth]{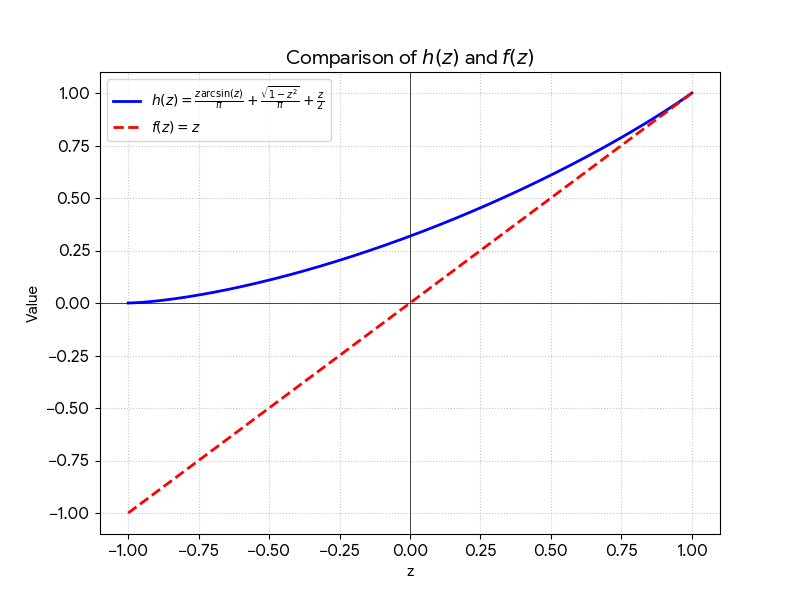}
    \caption{Plot of the function $h$ (blue) as a function of $z$. For reference, we plot the identity function in red. Note that by composition, $\lim_{L \to \infty} h^L(z) \to 1$ for any $z \in [-1, 1]$. This key property is used in the proof of Theorem~\ref{thm:convergence_theta_bar} (see Section\ref{sec:proof_convergence_theta_bar})}
    \label{fig:hfun}
\end{figure}

Figure~\ref{fig:hfun} illustrates the behaviour of function $h$ from Definition~\ref{def:hfun}. From the figure, it can be seen that $h$ is strictly increasing over $[-1,1]$, and that it converges to $1$ under composition for any $z \in [-1,1]$. The function $h$ is a key ingredient in the convergence of Theorem~\ref{thm:convergence_theta_bar}. Exploiting this functional structure yields an explicit recurrence relation for the normalized deep kernel.

\begin{restatable}[Alternative formulation of $\bar{\Theta}_{\infty}^{(L)}$]{prop}{thetarecursive}\label{prop:theta_recursive}
    The equality
    \begin{align*}
        \bar{\Theta}_{\infty}^{(L+1)}(x, x') &= \frac{L}{L+1} h'\left(\rho^{(L)}(x, x')\right) \bar{\Theta}_{\infty}^{(L)}(x, x') \\
        &+ \frac{1}{L+1} h\left(\rho^{(L)}(x, x') \right)
    \end{align*}
    holds $\forall x, x' \in S^{n_0 -1}$. Moreover, the values in the normalized kernel are all found in the interval $\left[ 0, 1 \right]$.
\end{restatable}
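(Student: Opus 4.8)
The statement splits into the recursion and the range claim; the plan is to obtain the recursion by feeding the ReLU closed forms of Propositions~\ref{prop:correlated_sigma} and~\ref{prop:closedform} into the NTK recursion of Theorem~\ref{thm:jacot_ntk} and rescaling by the normalization of Definition~\ref{def:bar_theta}, and to obtain the range by an induction that reads the recursion as a convex combination.

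For the recursion I would first record three elementary identities on $S^{n_0-1}$, where $\lVert x\rVert_2=\lVert x'\rVert_2=1$. First, differentiating $h$ from Definition~\ref{def:hfun} makes the two $z/(\pi\sqrt{1-z^2})$ terms cancel, so $h'(z)=\arcsin(z)/\pi+1/2$; comparing with Proposition~\ref{prop:closedform}, this is exactly $\dot{\Sigma}^{(L+1)}(x,x')=\tfrac12\,h'(\rho^{(L)}(x,x'))$. Second, Proposition~\ref{prop:closedform} together with Definition~\ref{def:hfun} says precisely $\rho^{(L+1)}(x,x')=h(\rho^{(L)}(x,x'))$. Third, Proposition~\ref{prop:correlated_sigma} applied with $\rho=1$ and $x=x'$ gives $\Sigma^{(L+1)}(x,x)=\Sigma^{(L+1)}(x',x')=1/(n_0 2^{L})$, and hence $\Sigma^{(L+1)}(x,x')=\rho^{(L+1)}(x,x')\sqrt{\Sigma^{(L+1)}(x,x)\,\Sigma^{(L+1)}(x',x')}=h(\rho^{(L)}(x,x'))/(n_0 2^{L})$. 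Substituting the first and third identities into $\Theta_{\infty}^{(L+1)}=\dot{\Sigma}^{(L+1)}\Theta_{\infty}^{(L)}+\Sigma^{(L+1)}$ and multiplying through by $n_0 2^{L}/(L+1)$, Definition~\ref{def:bar_theta} turns the left-hand side into $\bar{\Theta}_{\infty}^{(L+1)}$, the additive term into $\tfrac{1}{L+1}h(\rho^{(L)})$, and the multiplicative term into $\tfrac{L}{L+1}h'(\rho^{(L)})\bar{\Theta}_{\infty}^{(L)}$, using $n_0 2^{L-1}\Theta_{\infty}^{(L)}=L\,\bar{\Theta}_{\infty}^{(L)}$; this is the claimed identity, valid for all $x,x'\in S^{n_0-1}$.

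For the range, I would note that $h$ is nondecreasing on $[-1,1]$ (its derivative $\arcsin(z)/\pi+1/2$ is nonnegative and at most $1$ there) with $h(-1)=0$ and $h(1)=1$, so in particular $\rho^{(L)}=h(\rho^{(L-1)})\in[0,1]$ for every $L\ge 2$, and both $h(\rho^{(L)})$ and $h'(\rho^{(L)})$ lie in $[0,1]$. Since the weights $\tfrac{L}{L+1}$ and $\tfrac{1}{L+1}$ sum to one, the recursion displays $\bar{\Theta}_{\infty}^{(L+1)}$ as a convex combination of $h'(\rho^{(L)})\bar{\Theta}_{\infty}^{(L)}$ and $h(\rho^{(L)})$; hence $\bar{\Theta}_{\infty}^{(L)}\le 1$ follows by induction from $\bar{\Theta}_{\infty}^{(1)}(x,x')=n_0\Sigma^{(1)}(x,x')=x^\top x'\le 1$, because each combined quantity is then $\le 1$, and $\bar{\Theta}_{\infty}^{(L)}\ge 0$ propagates from any depth at which it first holds, since all factors on the right are then nonnegative. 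I expect the main obstacle to be starting the lower bound: $\bar{\Theta}_{\infty}^{(1)}=x^\top x'$ and $\bar{\Theta}_{\infty}^{(2)}=\tfrac12\big(h'(x^\top x')\,x^\top x'+h(x^\top x')\big)$ can be slightly negative when $x^\top x'<0$, so one must bound the magnitude of this negative part as a one-variable quantity in $x^\top x'$ and then absorb it using $h(\rho^{(L)})\ge h(0)=1/\pi>0$ in the following step to get $\bar{\Theta}_{\infty}^{(L)}\ge 0$ from the relevant depth onward; everything else — the substitution of closed forms and the convexity bookkeeping — is routine.
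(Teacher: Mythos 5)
Your derivation of the recursion is essentially the paper's own proof: the paper likewise substitutes $\dot{\Sigma}^{(L+1)}=\tfrac12 h'\bigl(\rho^{(L)}\bigr)$ and $\Sigma^{(L+1)}=h\bigl(\rho^{(L)}\bigr)/(n_0 2^{L})$ (from Propositions~\ref{prop:correlated_sigma} and~\ref{prop:closedform}) into the recursion of Theorem~\ref{thm:jacot_ntk} and rescales using Definition~\ref{def:bar_theta}; your three preliminary identities simply make explicit what the paper cites in one line, and they are all correct.

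On the range claim the two treatments diverge, and yours is the more careful one. The paper dismisses it as ``immediate from the definition and Proposition~\ref{prop:closedform}'', whereas you run a convexity induction and flag the lower bound at small depth as the obstacle. That obstacle is real and not an artifact of your method: on the sphere $\bar{\Theta}_{\infty}^{(1)}(x,x')=x^\top x'$ is negative for negatively correlated inputs, and $\bar{\Theta}_{\infty}^{(2)}(x,x')=\tfrac12\bigl(\rho\,h'(\rho)+h(\rho)\bigr)$ with $\rho=x^\top x'$ is also negative, e.g.\ at $\rho=-0.8$ its value is about $-0.07$. So the claim as stated (all values in $[0,1]$ for all $x,x'\in S^{n_0-1}$ and all depths) is literally false at $L=1,2$, and the paper's one-line justification does not address this. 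Your absorption fix does close it from $L=3$ onward: $h\ge0$ on $[-1,1]$ gives $\rho^{(2)}\ge0$, hence $h\bigl(\rho^{(2)}\bigr)\ge 1/\pi$ and $h'\bigl(\rho^{(2)}\bigr)\le1$, and the one-variable bound $\min_{\rho\in[-1,1]}\tfrac12\bigl(\rho h'(\rho)+h(\rho)\bigr)\approx-0.068$ yields $\tfrac23 h'\bigl(\rho^{(2)}\bigr)\bigl|\bar{\Theta}_{\infty}^{(2)}\bigr|\le 0.05<\tfrac{1}{3\pi}$, after which nonnegativity (and the upper bound, which your convex-combination argument already gives at every depth) propagates by induction. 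To finish, you should state that quantitative bound explicitly and record that the $[0,1]$ statement should be read as holding for $L\ge3$, or under nonnegative pairwise correlations; with that amendment your argument is complete and strictly more rigorous than the paper's on this point.
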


We explicitly provide the early stopping version of the output $f_t$ of a fully-connected ReLU network trained with Euclidean loss. By Theorem~\ref{thm:convergence_theta_bar}, for large $L$ the space spanned by the $n-1$ smallest eigenvalues will get close to the space spanned by the $n-1$ eigenvectors associated with the smallest eigenvalues of the matrix $\mathbf{1}_n \mathbf{1}^\top$. These two subspaces can be rotated into the other with an $n \times n$ rotation matrix $R$ that is close to the identity. Writing the eigenvector decomposition of $\bar{\Theta}_{\infty}^{(L)}\left( X X^\top \right)$ as $V^{(L)} \Lambda^{(L)} V^{(L)\top}$, we establish below the closed-form of the prediction function at time $t$.

\begin{prop}\label{prop:matrix_exp}
    For a square matrix $A$, let $\exp(A)$ denote its matrix exponential, i.e., 
    \begin{equation*}
        \exp(A) = \sum_{k=0}^\infty \frac{A^k}{k!}.
    \end{equation*}
    Given kernel $\bar{\Theta}_{\infty}^{(L)}$ and dataset $X$, the function $f_t$ is expressed by 
    \begin{align*}
        f_t(x) = f_0(x) + \bar{\Theta}_{\infty}^{(L)} \left( x^\top X^\top \right)
        \left( \bar{\Theta}_{\infty}^{(L)}  \left( X X^\top\right) \right)^{-1} \left( \exp\left( -t \bar{\Theta}_{\infty}^{(L)} \left( X X^\top \right) \right) - I_n \right) \left( f_0(X) - f^*(X)\right).
    \end{align*}
    If $\{\lambda_k\}_{k=1}^n$ are the eigenvalues of $\bar{\Theta}^{(L)} \left( X X^\top \right)$, then $\frac{e^{-t\lambda_k} - 1}{\lambda_k}$ are the eigenvalues of the matrix
    \begin{equation*}
        \left( \bar{\Theta}_{\infty}^{(L)} \left( X X^\top \right) \right)^{-1} 
        \left( \exp \left( -t \bar{\Theta}_{\infty}^{(L)} \left( X X^\top \right) \right) - I_n \right).
    \end{equation*}
\end{prop}
\begin{proof}
    Computing $\frac{d}{dt} f_t$ yields
    \begin{equation*}
        \frac{d}{dt} f_t(x) = - \bar{\Theta}_{\infty}^{(L)}(x^\top X^\top) \exp\left( -t \bar{\Theta}_\infty^{(L)} \left( X X^\top \right) \right) \left( f_0(X) - f^*(X) \right).
    \end{equation*}
    which satisfies the differential equation of the gradient flow problem in \citet{jacot2018neural} with initial condition $f_0(x)$. Moreover, it agrees at $t=\infty$ with Proposition~\ref{prop:jacot_f_infty}. To obtain the eigenvalues, we write
    \begin{align*}
        \left( \bar{\Theta}^{(L)} \left( X X^\top \right) \right)^{-1} \left( \exp \left( -t \bar{\Theta}_{\infty}^{(L)} \left( X X^\top \right) \right) - I_n \right)&=
        V^{(L)} \Lambda_{\text{inv}}^{(L)} V^{(L)\top} V^{(L)} 
        \left( \exp\left(-t \Lambda ^{(L)}\right) - I_n \right)V^{(L)\top} \\
        &= V^{(L)} \Lambda_{\text{inv}}^{(L)} \left( \exp \left(  -t  \Lambda^{(L)} \right)  - I_n\right)V^{(L)\top},
    \end{align*}
    where $\Lambda_{\text{inv}}^{(L)}$ is the inverse of $\Lambda^{(L)}$. The statement follows since $\Lambda^{(L)}$ is a diagonal matrix.
\end{proof} 
We call the term $\bar{\Theta}_{\infty}^{(L)} \left( x^\top X^\top \right) \left( \bar{\Theta}_\infty^{(L)} \left( X X^\top \right) \right)^{-1} \left( \exp \left( -t \bar{\Theta}_\infty^{(L)} \left( X X^\top \right) \right) - I_n \right)$ the \textbf{projection operator}. When necessary to highlight the dependence on $x$, we will instead refer to the \textbf{projection operator induced by $x$}.

\begin{rem}
    In Proposition~\ref{prop:matrix_exp}, we depart from the conventions of \citet{jacot2018neural} by omitting the explicit dataset scale factor $\frac{1}{n}$ within the exponential mapping. This structural choice directly impacts the asymptotic rates established in Theorem~\ref{thm:deep_stable_ntk}, since the eigenvalues $\lambda_i$ are now mapped to $\frac{e^{-\frac{t \lambda_i}{n}} - 1}{\lambda_i}$. Note that other ``learning rates'' can be used via a hyperparameter $\gamma$ (see \citet{lee2019wide} for a usage example). 
\end{rem}

\begin{prop}\label{prop:unfiform_close}
    Let $n_0 \geq 3$. If data points are sampled uniformly from $S^{n_0 - 1}$, then for any $1 > \delta > 0$, the probability that there is $x, x' \in S^{n_0-1}$ such that $x^\top x' \in [1 - \delta, 1]$ is upper bounded by
    \begin{equation*}
        \binom{n}{2} \frac{1}{\sqrt{\pi}} \sqrt{\frac{n_0}{2}} \frac{\delta^{n_0 - 1}}{n_0 - 1}.
    \end{equation*}
\end{prop}
\begin{proof}
    Using a known reult in \citet{cai2013distributions}, the probability density of the angle between two uniformly sampled points in $S^{n_0 -1}$ is given by
    \begin{equation*}
        P(\theta \leq \delta) = \frac{1}{\sqrt{\pi}} \frac{\Gamma\left( \frac{n_0}{2} \right)}{\Gamma\left( \frac{n_0 - 1}{2} \right)} \int_0^\delta \sin^{n_0 - 2}(\theta) \, d \theta,
    \end{equation*}
    and via a union bound argument, we obtain
    \begin{align*}
        P(\exists x,x' \in X \mid x^\top x' \in [1-\delta, 1]) &= \binom{n}{2} \frac{1}{\sqrt{\pi}} \frac{\Gamma\left( \frac{n_0}{2} \right)}{\Gamma\left( \frac{n_0 - 1}{2} \right)} \int_0^\delta \sin^{n_0 - 2}(\theta) \, d \theta \\
        &\leq \binom{n}{2} \frac{1}{\sqrt{\pi}} \sqrt{\frac{n_0}{2}} \int_0^\delta \sin^{n_0 -2}(\theta) \, d \theta \tag{$\frac{\Gamma(x + 1)}{\Gamma(x + \frac{1}{2})} \leq \sqrt{x+1} $ for $x > 0$} \\
        &\leq \binom{n}{2} \frac{1}{\sqrt{\pi}} \sqrt{\frac{n_0}{2}} \int_0^\delta \theta^{n_0 - 2} \, d \theta \tag{$\sin(\theta) \leq \theta$ for $\theta < 1$}\\
        &= \binom{n}{2} \frac{1}{\sqrt{\pi}} \sqrt{\frac{n_0}{2}} \frac{\delta^{n_0 -1}}{n_0 - 1}.
    \end{align*}
\end{proof}

Proposition~\ref{prop:unfiform_close} demonstrates that with high probability, distinct data points will be not be close in orientation. In conjunction with Theorem~\ref{thm:deep_stable_ntk}, we recover stability in many experimental scenarios. To see this, we refer to Example~\ref{ex:deep_stable}.

\begin{example}\label{ex:deep_stable}
    For $n_0 = 128$ and $n = 2^{30}$, we set $\delta = \binom{n}{2}^{-\frac{1}{n_0 -1}}$ in Proposition~\ref{prop:unfiform_close}. Evaluating $\delta \approx 0.72$, we have that $P(\exists x, x' \in X \mid x^\top x' \in [1-\delta, 1]) \leq 0.04$. With high probability, the expression $\frac{1}{\sqrt{1 - \rho^{(2)}(x^\top x')}} \lessapprox 1.37$. The offset in the recurrence of Theorem~\ref{thm:deep_stable_ntk} is dominated by the depth $L$ for even small values of $L$. This shows that for a rather large dataset size $n$ and a much smaller input dimension $n_0$, we still obtain fast convergence of $\Theta_{\infty}^{(L)} \left( X X^\top \right)$ to $\frac{1}{4} \mathbf{1}_n \mathbf{1}_n^\top + \frac{3}{4} I_n$ for small values of $L$. The degenerate case where some $x, x' \in X$ are arbitrarily close to each other can be excluded with high probability.
\end{example}

In fact, Theorem 2 from \citet{cai2013distributions} shows that the probability that $x_i^\top x_j \in \left[1 - n^{-\frac{2}{n_0 - 1}}, 1 \right]$, where $i,j \in [n]$ and $i \neq j$, is upper bounded by $1 - e^{-K}$ for some constant $K$ defined as
\begin{equation*}
    K := \frac{1}{4 \sqrt{\pi}} \frac{\Gamma\left( \frac{n_0}{2} \right)}{\Gamma \left( \frac{n_0 + 1}{2} \right)}.
\end{equation*}
In Example~\ref{ex:deep_stable}, this upper bound translates to a bound $\approx 0.02$. Note that the probability does not depend on $n$. By scaling $L$ with, say, $\frac{1}{\sqrt{1 - h\left(1 - \delta \right)}} \approx \frac{1}{\sqrt{\delta}} = \binom{n}{2}^{\frac{1}{2(n_0 -1)}}$, with a $\approx 98\%$ chance, we will have a stable output $f_t$ that retains its ability to distinguish different inputs.

\subsection{Proof of Theorem~\ref{thm:convergence_theta_bar}}\label{sec:proof_convergence_theta_bar}

To prove Theorem~\ref{thm:convergence_theta_bar}, we use definitions and results from Section~\ref{sec:auxiliary_results}, namely Lemma~\ref{lem:convergence_rho}, Definition~\ref{def:hfun} and Proposition~\ref{prop:theta_recursive}. The proof strategy relies on writing the kernel $\Theta_{\infty}^{(L)}$ using the convex combination of Proposition~\ref{prop:theta_recursive}, and expanding recursively for superscripts $(l)$. The convergence property of $\rho^{(L)}$ in Lemma~\ref{lem:convergence_rho} ensures that the expansion converges. 

\begin{proof}
    Let $z \in [-1,1]$. If we define the sequence of error terms $\{e_{L}(z) \}_{L=1}^\infty$ from 1 at layer $L$ via
    \begin{align*}
        e_{L+1}(z) &= 1 - h\left(\rho^{(L)}(z)\right) \\
        e_2(z) &= 1 - h(\rho^{(1)}(z)) = 1 - \rho^{(2)}(z),
    \end{align*}
    we use the Taylor expansion of $\sqrt{1 - \frac{z}{2}}$ and $\arccos(1-z)$ at $z=0$ to obtain
    \begin{align*}
        \sqrt{1-(1-z)^2} &= \sqrt{2z - z^2} = \sqrt{2z} \sqrt{1 - \frac{z}{2}} \\
        &= \sqrt{2z} \left( 1 - \frac{z}{4} + O\left( z^{2} \right) \right) \\
        &= \sqrt{2}z^{\frac{1}{2}} - \frac{\sqrt{2} z^{\frac{3}{2}}}{4} + O \left( z^{\frac{5}{2}} \right) \\
        \arcsin(1 - z) &= \frac{\pi}{2} - \arccos(1 - z) \\
        &= \frac{\pi}{2} - \sqrt{2} z^{\frac{1}{2}} + \frac{\sqrt{2}}{12} z^{\frac{3}{2}} + O\left( z^{\frac{5}{2}} \right).
    \end{align*}
    These results imply that for $z \approx 0$,
    \begin{align*}
        h(1 - z) &= \frac{1}{\pi} \left( \frac{\pi}{2} - \sqrt{2} z^{\frac{1}{2}} - \frac{\pi}{2} z + \frac{11 \sqrt{2}}{12} z^{\frac{3}{2}} + \sqrt{2} z^{\frac{1}{2}} - \frac{\sqrt{2}}{4} z^{\frac{3}{2}}  \right) + \frac{1 - z}{2} + O\left( z^{2} \right) \\
        &= 1 - z + \frac{2 \sqrt{2}}{3 \pi} z^{\frac{3}{2}} + O\left( z^2 \right).
    \end{align*}
    We obtain the recurrence relation
    \begin{align*}
        &\phantom{{}\leq{}} \\
        && 1 - e_{L+1}(z) &= h\left( 1 - \left(1 - \rho^{(L)}(z) \right) \right) = h\left( 1 - e_L(z) \right) \\
        &&&= 1 - e_L(z) + \frac{2\sqrt{2}}{3 \pi} e_L^{\frac{3}{2}}(z) + O \left( e_L^2(z) \right) \\
        &\implies &e_{L+1}(z) &= e_L(z) - \frac{2 \sqrt{2}}{3 \pi} e_L^{\frac{3}{2}}(z) + O \left( e_L^2 (z)\right).
    \end{align*}
    If we define $V_L(z) := e_L^{-\frac{1}{2}}(z)$, using the Taylor expansion of $(1 - x)^{-\frac{1}{2}} = 1 + \frac{x}{2} + O \left( x^2 \right)$ at $x=0$, we obtain
    \begin{align*}
        V_{L+1}(z) &= e^{-\frac{1}{2}}_{L+1}(z) = \left( e_L(z) - \frac{2 \sqrt{2}}{3 \pi} e_L^{\frac{3}{2}}(z) + O \left( e_L^2(z)\right) \right)^{-\frac{1}{2}} \\
        &= e_L^{-\frac{1}{2}}(z) \left( 1 - \frac{2 \sqrt{2}}{3 \pi} e_L^{\frac{1}{2}}(z) + O \left( e_L(z) \right) \right)^{-\frac{1}{2}} \\
        &= e_L^{-\frac{1}{2}}(z) \left( 1 + \frac{\sqrt{2}}{3 \pi} e_L^{\frac{1}{2}}(z) + O\left( e_L(z) \right) \right) \\
        &= V_L(z) + \frac{\sqrt{2}}{3 \pi} + O \left( e_L^{\frac{1}{2}}(z) \right).
    \end{align*}
    Using a telescoping sum, the difference $V_{L+1} - V_2$ can be written as
    \begin{align*}
        V_{L+1}(z) - V_2(z) &= \sum_{k=2}^{L} \left( V_{k+1}(z) - V_{k}(z) \right)\\
        &=  \frac{L\sqrt{2}}{3 \pi} + \sum_{k=2}^{L} O \left( e_k^{\frac{1}{2}}(z) \right).
    \end{align*}
    By the Stolz-Cesàro Theorem, 
    \begin{equation*}
        \lim_{L \to \infty} \frac{\sum_{k=2}^{L} O \left( e_k^{\frac{1}{2}}(z) \right)}{L} = \lim_{L \to \infty} O\left( e_{L+1}^{\frac{1}{2}}(z) \right) = 0.
    \end{equation*}
    This implies that $V_{L+1}(z) - V_2(z) \in \mathcal{O}\left( \frac{L \sqrt{2}}{3 \pi} \right)$, from which we deduce that $e_L(z) \in \mathcal{O} \left( \frac{1}{L^2} \right)$. Now, to find the error $1 - \bar{\Theta}^{(L)}_\infty (x^\top x)$, we use the Taylor expansion of $h'(1 - z)$ when $z \approx 0$:
    \begin{align*}
        h'(1-z) &= 1 - \frac{\sqrt{2}}{\pi} z^{\frac{1}{2}} + O\left( z^{\frac{3}{2}}\right)
    \end{align*}
    Substituting $e_{L}(z)$ for $z$ in the equation above, we get
    \begin{align*}
        1 - h'(1 - e_L(z)) &= 1 - \left(1 - \frac{\sqrt{2}}{\pi} e_{L}^{\frac{1}{2}}(z) + O\left( e_{L}^{\frac{3}{2}}(z)\right) \right) \\
        &= \frac{\sqrt{2}}{\pi} e_{L}^{\frac{1}{2}}(z) + O \left( e_{L}^{\frac{3}{2}}(z) \right) \\
        &\in \mathcal{O} \left( \frac{1}{L} \right).
    \end{align*}

We can now express the convergence rate of $\Theta_{\infty}^{(L)}(x, x')$ for $x^\top x' \in [-1, 1[$. To simplify notation, we use the definition of $\epsilon_L(z)$ to refer to $\Theta_{\infty}^{(L)}(x, x')$ when $x^\top x' = z$. Via Proposition~\ref{prop:theta_recursive}, we obtain the recurrence relation
\begin{align*}
    \epsilon_{L+1}(z) &= \frac{L}{L+1} h'(1 - e_{L}(z)) \epsilon_L(z) + \frac{1}{L+1} (1 - e_{L+1}(z)) \\
    &= \frac{L}{L+1} \left(1 - \frac{\sqrt{2}}{\pi} e_L^{\frac{1}{2}}(z) + O \left( e_L^{\frac{3}{2}}(z)\right) \right) \epsilon_L(z) \\
    &+ \frac{1}{L+1} \left( 1 - \left(e_1^{\frac{1}{2}}(z) +  \frac{L\sqrt{2}}{3 \pi} + O \left( \sum_{k=1}^L e_k^{\frac{1}{2}}(z)  \right)\right)^{-2} \right) \\
    &= \frac{L}{L+1} \left( 1 - \frac{3}{L} \right) \epsilon_L(z) + \frac{1}{L+1} + o \left( \frac{1}{L} \right) \\
    &= \left( 1 - \frac{4}{L+1} \right) \epsilon_L(z) + \frac{1}{L+1} + o \left( \frac{1}{L} \right).
\end{align*}
This can be rewritten to obtain
\begin{align}\label{eq:epsilon_recurrence}
    \epsilon_{L+1}(z) &= \left( 1 - \frac{4}{L+1} \right) \epsilon_L(z) + \frac{1}{L+1} + o \left( \frac{1}{L} \right).
\end{align}
Ignoring the $o \left( \frac{1}{L} \right)$, equation~\eqref{eq:epsilon_recurrence} has solution $\frac{1}{4}$. To see this, we expand~\eqref{eq:epsilon_recurrence} up to $\epsilon_l(z)$, for a fixed and large $l$, to obtain
\begin{align*}
    \epsilon_{L+1}(z) &= \sum_{k=2}^{L+1} \frac{1}{k} \prod_{k'=k+1}^{L+1} \left( 1 - \frac{4}{k'}\right)  + o \left( \frac{1}{L} \right) 
    + \epsilon_2(z) \prod_{k'=2}^{L+1} \left( 1 - \frac{4}{k'} \right)\\
    &= \sum_{k=2}^{L+1} \frac{1}{k} \exp\left( \sum_{k'=k+1}^{L+1} \ln \left( 1 - \frac{4}{k'} \right) \right) + o \left( \frac{1}{L} \right) 
    + \epsilon_2(z) \exp\left( \sum_{k'=2}^{L+1} \ln\left( 1 - \frac{4}{k'} \right) \right) \\
    &\leq \sum_{k=2}^{L+1} \frac{1}{k} \exp \left(- \sum_{k'=k+1}^{L+1} \frac{4}{k'} \right) + o \left( \frac{1}{L} \right) \tag{$\ln(1 - x) \leq - x$ for $0 < x \leq 1$} 
    + \epsilon_2(z) \exp\left( \sum_{k'=2}^{L+1} \left(- \frac{4}{k'} \right) \right) \\
    &\leq \sum_{k=2}^{L+1} \frac{1}{k} \exp \left( 4 \left( \ln\left(\frac{k}{L+1}\right)  + \delta \right) \right) \tag{using $\ln(n) \approx \sum_{k=1}^n \frac{1}{k} + \gamma$ for large $n$}  + o \left( \frac{1}{L} \right) 
    + \epsilon_2(z) \exp\left( 4 \left( \ln \left(\frac{1}{L+1} \right) + \delta \right) \right) \\
    &= e^\delta \sum_{k=2}^{L+1} \frac{1}{k} \left( \frac{k}{L+1} \right)^4 + o \left( \frac{1}{L} \right) 
    + e^\delta \epsilon_2(z)  \left(\frac{1}{L+1} \right)^4,
\end{align*}
where the multiplication factor $e^\delta$ can be made close to $1$ (i.e. $\delta \to 0^+ $ by choosing a large enough $L$. Since the sum
\begin{equation*}
    \sum_{k=1}^{L+1} k^3 = \left( \frac{(L+1)(L+2)}{2} \right)^2 \tag{sum of first $L+1$ cubes formula},
\end{equation*}
we can expand the right-hand side to obtain $\frac{(L+1)^4 + 2(L+1)^3 + (L+1)^2}{4}$. Dividing by $(L+1)^4$, we finally obtain $\frac{1}{4} + \frac{1}{2(L+1)} + \frac{1}{4 (L+1)^2}$. As $L \to \infty$, this approaches $\frac{1}{4}$.
\end{proof}

\begin{proof}[Corollary~\ref{cor:convergence_theta_bar}]
    In the proof of Theorem~\ref{thm:convergence_theta_bar}, the leading term in the error $1 - h'(1 - e_L(z))$ is $\frac{\sqrt{2}}{\pi} e_L^{\frac{1}{2}}(z)$. Notice that
    \begin{equation*}
        e_L^{\frac{1}{2}}(z) = \frac{1}{e_2^{-\frac{1}{2}}(z) + \frac{L\sqrt{2}}{\pi} + o(L)}.
    \end{equation*}
    The claim immediately follows from $e_2^{-\frac{1}{2}}(z) = \frac{1}{\sqrt{1 - h(\rho^{(1)}(x^\top x'))}} = \frac{1}{\sqrt{1 - \rho^{(2)}(x^\top x')}}$.
\end{proof}

\subsection{Proof of Theorem~\ref{thm:deep_stable_ntk}}

By Theorem~\ref{thm:convergence_theta_bar}, we see that $\bar{\Theta}_\infty^{(L)}(x, x') \to \frac{1}{4}$ for $x, x' \in S^{n_0 - 1}$ with $x \neq x'$ (i.e., off-diagonal components); the convergence rate is $O\left( \frac{1}{L} \right)$. Notice that that there is a hidden dependence on the inputs $x$ and $x'$ inside the $O$ notation. However, as $L$ grows large, we can safely expect that the convergence will be upper bounded by a $\frac{1}{L}$ decay. Moreover, since $h$ and $h'$ are strictly increasing on $[-1, 1]$, it is easy to observe that $\bar{\Theta}_{\infty}^{(L)}(x, x')$ is increasing in $x^\top x'$ for any $L$. Since the input dimension $n_0$ is fixed, as we increase the number of data points in $X$, e.g. through data collection, we will inevitably get points that are less separated (i.e. closer to each other). This will drive the smallest eigenvalue of $\bar{\Theta}_{\infty}^{(L)} \left( X X^\top \right)$ towards $0$ (when $L$ is fixed). Meanwhile, the component-wise convergence to $\frac{1}{4}$ does not depend on the size $n$ of the dataset, and only depends on $L$. As we show next, it is possible to control the stability of the learning dynamics of $f_t$ by properly scaling the depth $L$ and performing early stopping.

By inspection of the projection operator of Proposition~\ref{prop:matrix_exp}, we observe that the matrix
    \begin{equation}\label{eq:mapped_eigvals}
        \left( \bar{\Theta}_{\infty}^{(L)} \left( X X^\top \right) \right)^{-1} \left( \exp \left( -t \bar{\Theta}_{\infty}^{(L)} \left( X X^\top \right) \right) -I_n  \right),
    \end{equation}
has eigenvalues $\frac{e^{-t \lambda_k} - 1}{\lambda_k}$. If $\{v_1, \dots, v_n \}$ is the set of eigenvectors with eigenvalues $\{\lambda_1, \dots, \lambda_n\}$ in the eigendecomposition of $\bar{\Theta}_{\infty}^{(L)} \left( X X^\top \right)$, the Davis-Kahan Theorem implies that the space spanned by $ \{v_2, \dots, v_n \}$ will be close in orientation to the space spanned by $\{\tilde{v}_2, \dots, \tilde{v}_n \}$, where $\tilde{v}_i$ are the eigenvectors of $\frac{1}{4} \mathbf{1}_n \mathbf{1}_n^\top + \frac{3}{4} I_n$ with eigenvalues $\{\lambda_1, \dots, \lambda_n \}$~\citep{tran2026davis}.
    \begin{align}\label{eq:davis-kahan-alignment}
        \left\lVert \sum_{i=2}^n v_i v_i^\top - \sum_{i=2}^n \tilde{v}_i \tilde{v}_i^\top \right\rVert_2 \leq  \frac{\pi \frac{n}{L}}{\frac{n}{4}} = \frac{4\pi}{L}
    \end{align}
    Therefore, for large enough $L$, the projection of any vector onto $\operatorname{span} \left( \{v_2, \dots, v_n \} \right)$ will be close to the projection onto $\operatorname{span} \left( \{\tilde{v}_2, \dots, \tilde{v}_n \} \right)$. 

The matrix in~\eqref{eq:mapped_eigvals} maps $\lambda_n$ to $\frac{e^{-t \lambda_n} - 1}{\lambda_n}$, and this is approximately $-t$ when $t \lambda_n $ is small. In this case, the evolution of $f_t$ is close to an approximation $\hat{f}_t$ defined as in Proposition~\ref{prop:matrix_exp} (see expression for $f_t$) with the matrix $\frac{1}{4} \mathbf{1}_n \mathbf{1}_n^\top + \frac{3}{4} I_n$ in place of $\bar{\Theta}_{\infty}^{(L)} \left( X X^\top \right)$. Equation~\ref{eq:davis-kahan-alignment} implies that the difference in norm between the projection of a vector on $\{v_2, \dots, v_n\}$ and its projection on $\{ \tilde{v}_2, \dots, \tilde{v}_n \}$ is in $O\left(\frac{1}{L}\right)$. For eigenvectors $v_1$ and $\tilde{v}_1$, a similar application of the Davis-Kahan Theorem yields $v_1^\top \tilde{v}_1 \approx 1 - \frac{4 \pi}{L} $ for large depth~\footnote{Note that $v_1^\top \tilde{v}_1 \geq 0$ for large $L$.}:
     \begin{align*}
         \left\lVert v_1 v_1^\top v_1 - \tilde{v}_1 \tilde{v}_1^\top v_1 \right\rVert_2^2 &= \left\lVert v_1 - \tilde{v}_1 (\tilde{v}_1^\top v_1) \right\rVert_2^2 \\
         &= \lVert v_1 \rVert_2^2 + (\tilde{v}_1^\top v_1)^2 \lVert \tilde{v}_1 \rVert_2 - 2 (\tilde{v}_1^\top v_1)^2 \\
         &= 1 - (\tilde{v}_1^\top v_1)^2 \\
         &\leq \left( \frac{4 \pi}{L} \right)^2.
     \end{align*}

\begin{namedproof}{a)}
    From Theorem~\ref{thm:convergence_theta_bar} and the fact that $x_i^\top x_j \leq 1 - \delta$ when $i \neq j$, it can be seen that $X$ satisfies $\mathcal{O}\left( \frac{1}{L} \right)$-separability in \citet{karhadkar2024bounds}. Lemma 7 from \citet{karhadkar2024bounds} shows that the smallest eigenvalue of the kernel $\Sigma^{(L)} \left( X X^\top \right)$ satisfies
    \begin{align*}
        \lambda_n \left( \bar{\Theta}_{\infty}^{(L+1)} \left( X X^\top \right) \right)
        &= \frac{L}{L+1} h'\left( \rho^{(L)} \left( X X^\top \right) \right) \bar{\Theta}_{\infty}^{(L)} \left( X X^\top \right)
        + \frac{1}{L+1} \rho^{(L+1)} \left( X X^\top \right) \\
        &\geq \frac{1}{L+1} \sum_{l=2}^{L}
        \lambda_n \left( h'\left( \rho^{(l)} \left( X X^\top \right) \right) \right) \\
        &\in \frac{1}{L+1} \sum_{l=2}^L \Omega \left( \left( 1 + \frac{n_0 \ln\left( L \right)}{\ln(n_0)} \right)^{-3} \frac{1}{l^2} \right),
    \end{align*}
    where the first inequality uses $\lambda_n(A + B) \geq \lambda_n(A) + \lambda_n(B)$ and $\lambda_n ( A \odot B ) \geq \lambda_n(A) \min_{i \in [n]} B_{ii}$ for positive semidefinite matrices $A, B$. The last line follows from Lemma 7 by \citet{karhadkar2024bounds} since the $x_i \in X$ can be mapped to $x'_i \in S^{n_0-1}$ such that $(x'_i)^\top x'_j = \rho^{(l)}(x_i^\top x_j)$ for $l \geq 2$.  Hiding the dependence on $n_0$, this lower bound can be re-written as $\Omega\left( \frac{1}{L} \right)$. This implies that as $t \to \infty$, the norm $\lVert f_t(x) - \hat{f}_t(x) \rVert_2 \in O(n)$ for any $x \in S^{n_0 -1 }$, where the $O$ notation hides the dependence on $n_0$ and $f^*$. 
    
    Using the fact that $\lVert A(x - x') \rVert_2 \geq \min_{i \in [n]} |\lambda_i(A)| \times\lVert x - x' \rVert_2$ for an $n\times n$ matrix $A$ that is symmetric and diagonalizable, we get
    \begin{align}\label{eq:error_with_depth}
        &\phantom{{}\geq{}}
        \left\lVert 
        \left( \bar{\Theta}_{\infty}^{(L)} \left( X X^\top \right) \right)^{-1} \left( \exp \left( -t \bar{\Theta}_{\infty}^{(L)} \left( X X^\top \right) \right) - I_n \right)
        \left( \bar{\Theta}_{\infty}^{(L)} \left( X x \right) - \bar{\Theta}_{\infty}^{(L)} \left( X x' \right) \right) \right\rVert_2 \\
        &\geq
        \min_{i \in [n]} \frac{| e^{-t \lambda_i} - 1 | }{\lambda_i} \left\lVert \bar{\Theta}_{\infty}^{(L)} \left( X x \right) - \bar{\Theta}_{\infty}^{(L)} \left( X x' \right) \right\rVert_2 \nonumber \\
        &\in \Omega\left( \frac{1}{n} \left\lVert \bar{\Theta}_{\infty}^{(L)} \left( X x \right) - \bar{\Theta}_{\infty}^{(L)} \left( X x' \right) \right\rVert_2 \right) \nonumber \\
        &= \Omega \left( \frac{1}{L} \right) \nonumber,
    \end{align}
    where the notation $\Omega\left( \frac{1}{L} \right)$ hides the dependence on $x, x'$. As shown in Corollary~\ref{cor:convergence_theta_bar}, this dependence on $x,x'$ is relevant when $L \in O\left( \frac{1}{\sqrt{1 - \rho^{(2)} \left(x^\top x' \right)}} \right)$. In this instance, the error term~\eqref{eq:error_with_depth} is bounded away from $0$ via $ 1 - \rho^{(2)}(x^\top x') \geq \delta$. Therefore, prediction $f_t(x)$ is not trivial whenever $L$ is chosen appropriately. For a dataset $X$ with separation condition $\delta$, this is equivalent to $L \in O\left( \frac{1}{\sqrt{\delta}} \right)$.
 \end{namedproof}
 
 \begin{namedproof}{b)}
    In the case where $\lambda_n$ can go to $0$, it remains possible to control the effect of $\lambda_n$ on $f_t$. The matrix in~\eqref{eq:mapped_eigvals}
    has eigenvalues $\frac{e^{-t \lambda_k} - 1}{\lambda_k} \approx -t ( 1 - \frac{t}{2} \lambda_k)$ when $t \lambda_k \approx 0$. If $\lambda_n \to 0$ as $n$ increases, we can choose $t$ such that $0 < t \lambda \ll 1$. This ensures that the norm of the projection operator induced by $x$ is $O(n)$. Note that $\lambda_1$ is mapped to $\approx 0$ for large $L$ or $n$.

    As in a), $f_t$ remains non-trivial as long as $L$ is dominated by the separation of datapoints: $L \in O \left( \max_{i,j \in [n] \mid i \neq j} \frac{1}{\sqrt{1- \rho^{(2)} \left( x_i^\top x_j 
    \right)}} \right)$.
 \end{namedproof}

\subsection{Experiments and theoretical implications}\label{sec:experiments}

In the proof of Theorem~\ref{thm:deep_stable_ntk} from the previous section, we can identify the key properties used to derive the results. This allow us to distill the essential characteristics of sequences of general kernels that exhibit similar limiting behaviour.

For an arbitrary sequence of kernels $\kappa^{(L)}$ to satisfy the structural conditions used in the proof of the theorem, we first require that $\kappa^{(L)}(x, x) \geq \kappa^{(L)}(x_1, x_2)$ for any inputs $x, x_1, x_2 \in S^{n_0-1}$ across all depths $L \in \mathbb{N}$. That is, the diagonal terms (i.e., colinear data points) must be the largest values. As a second requirement, there must be some depth $\hat{L} \in \mathbb{N}$ such that $\kappa^{(L)}\left(X X^\top\right)$ is positive definite for any dataset $X = \left\{x_i \mid x_i \in S^{n_0 - 1}, i = 1, \dots, n \right\}$ and $L\geq \hat{L}$. In other words, once the network is sufficiently deep, its corresponding limiting kernel is positive definite. 

By inspection of Definition~\ref{def:correl_coeff}, it can be observed that $\Theta_{\infty}^{(L)} \left( X X^\top \right)$ with $\hat{L} = 2$ satisfies the criteria of the list above. Another example is given by the sequence $\eta^{(L)}$ of kernels that are defined recursively by $\eta^{(L+1)}(x, x') = h(\kappa^{(L)}(x,x'))$ and $\eta^{(1)}(x, x') = x^\top x'$, where $h(z)= \left(1 + e^{-z} \right)^{-2}$ (see Proposition~\ref{prop:convergence_eta} in Appendix~\ref{sec:additional_lemmas}).

In order to better understand the theoretical insights from the previous section, we empirically evaluate the convergence rates of $\bar{\Theta}_{\infty}^{(L)}$, $\rho^{(L)}$ and $\eta$ as $L$ increases. We illustrate this convergent behavior in Figure~\ref{fig:convergence_rates}, where we generate a dataset $X$ and a point $x$ from the uniform distribution ($n_0 = 128$) and we canonically project them onto the sphere. We then plot the evolution of these values for depths $L=1,\dots, 30$. Note that this depth limit is sufficient to show convergence. Each curve in the plots corresponds to a different pair of inputs (either from $X $ or between $x$ and $X$; e.g., $\bar{\Theta}_{\infty}^{(L)}(x_i, x_j)$). It is immediately apparent that both $\rho^{(L)}$ and $\eta^{(L)}$ converge to a fixed value rather quickly as the depth increases. 

\begin{figure}[ht]
    \centering
    \includegraphics[scale=0.83]{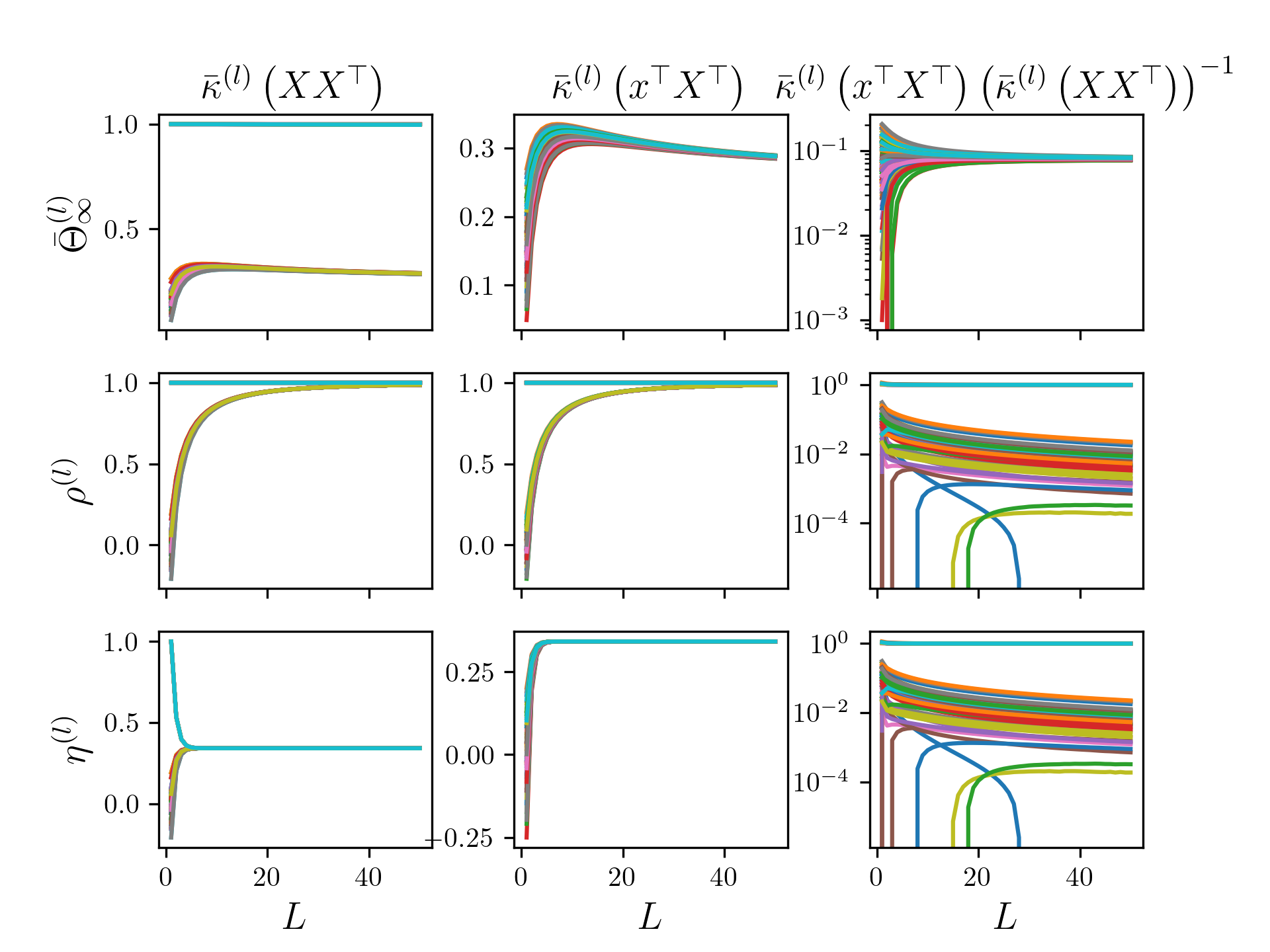}
    \caption{\textbf{Synthetic} dataset. Convergence rate of $\kappa$ on $X$ and point $x$. We model three different expressions dependent on an arbitrary $\kappa$ in each column. The particular choice of $\kappa$ is given by each row (see label on the left).}
    \label{fig:convergence_rates}
\end{figure}

In addition to the synthetic dataset $X$, we perform the same experiment on the MNIST dataset~\citep{lecun2010mnist}, whose tensors are converted to vectors that are normalized to lie on the sphere. Data points are chosen uniformly at random from the training set; we use sampling without replacement to guarantee that data points are not colinear. We report the results in Figure~\ref{fig:convergence_rates_mnist}.

\begin{figure}[ht]
    \centering
    \includegraphics[scale=0.83]{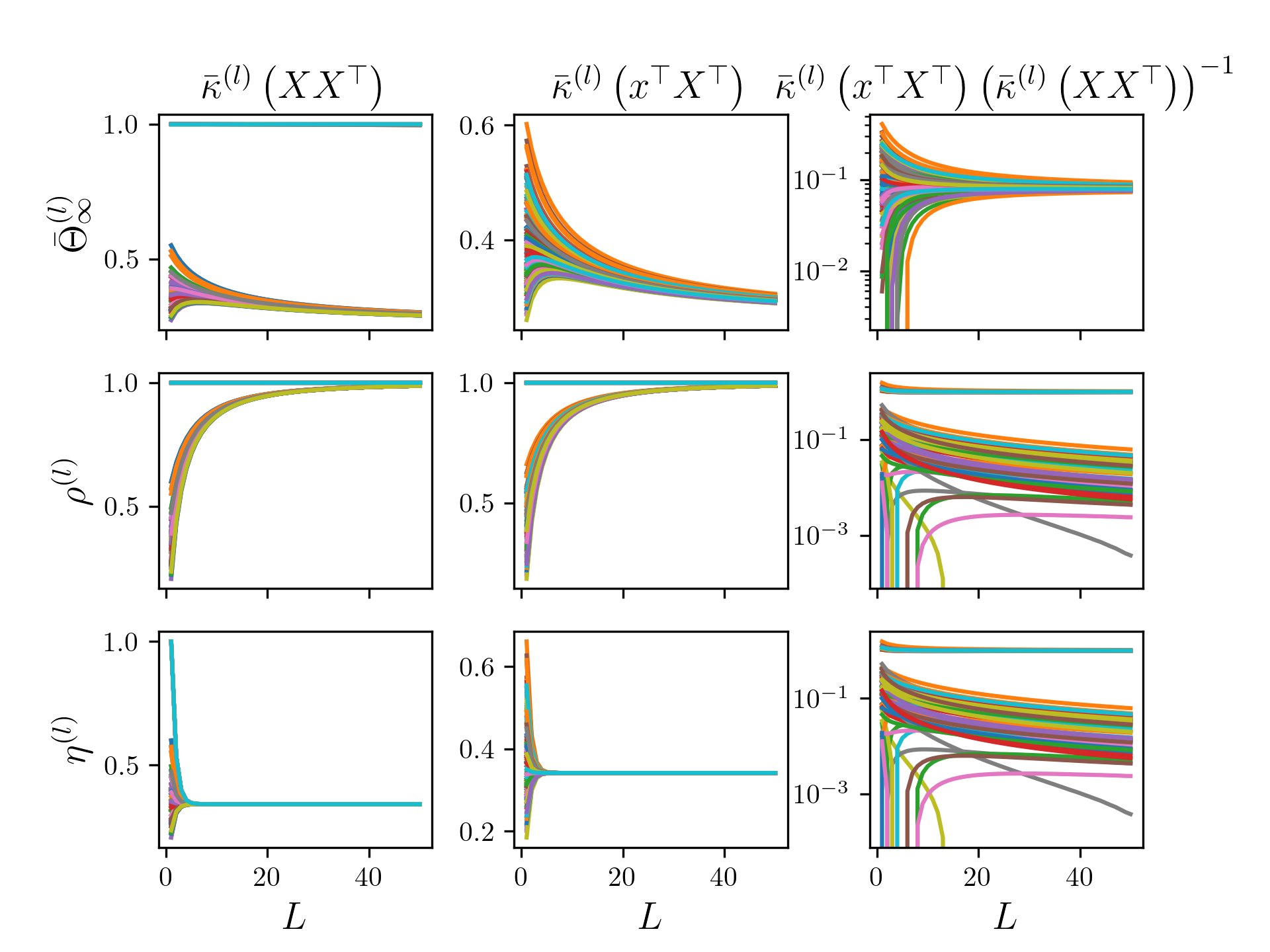}
    \caption{\textbf{MNIST} datset. Convergence rate of $\kappa$ on $X$ and point $x$. We model three different expressions dependent on an arbitrary $\kappa$ in each column. The particular choice of $\kappa$ is given by each row (see label on the left).}
    \label{fig:convergence_rates_mnist}
\end{figure}

Across both the synthetic and MNIST datasets, we evaluate the norm of the projection term in the expansion of the prediction function $f_t$. In order to control the size of decreasing eigenvalues, we use a bounded stopping time $t \in \mathbb{N} \cap \left[ 0, \left\lceil \frac{1}{\lambda_n} \right\rceil \right]$ and $L = \max_{i,j \in [n] \mid i \neq j} \frac{1}{\sqrt{1 - h(x_i^\top x_j)}}$. We plot the norms in Figure~\ref{fig:stable_norms}.

\begin{figure}
    \centering
    \includegraphics[width=0.9\linewidth]{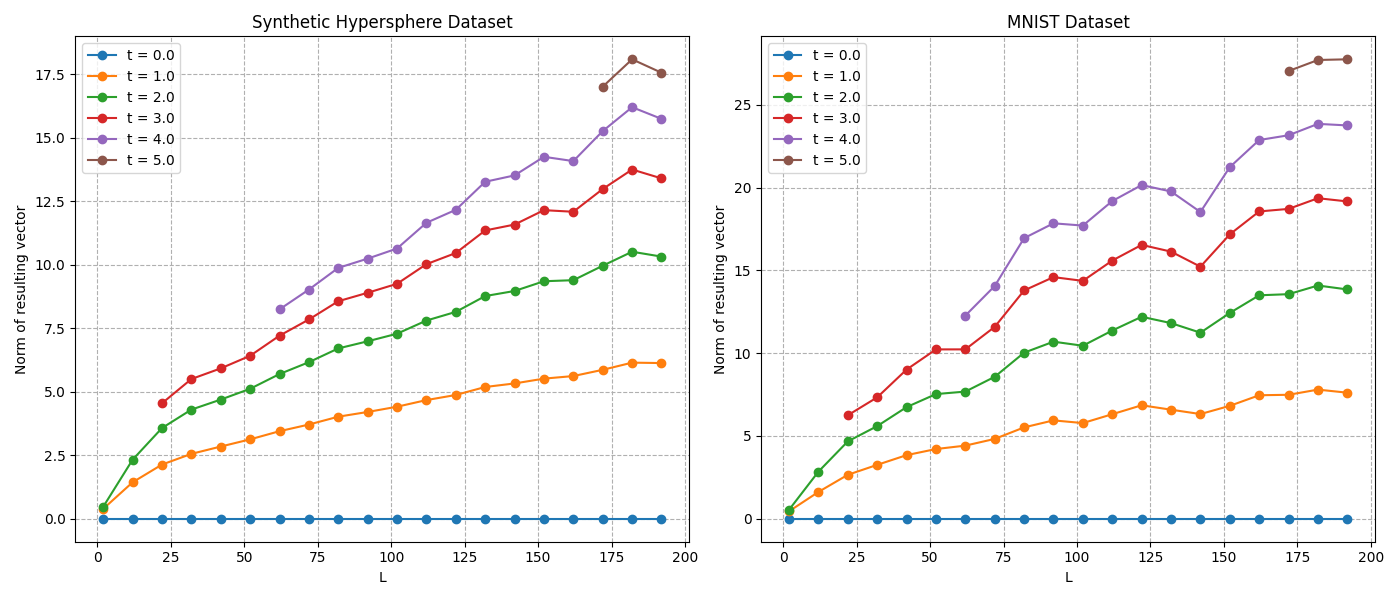}
    \caption{Norm of the projection term in the closed-form solution for $f_t(x)$ with respect to depth $L$. We use stopping times $t \in \mathbb{N} \cap \left[0, \left\lceil \frac{1}{\lambda_n} \right\rceil \right]$ and a depth of $L = \max_{i,j \in [n] \mid i \neq j} \frac{1}{\sqrt{1 - h(x_i^\top x_j)}}$ to show how the norm scales as we increase $n$.}
    \label{fig:stable_norms}
\end{figure}

In both datasets, the convergence rate is sublinear for $\bar{\Theta}_{\infty}^{(L)}\left( X X^\top \right)$ and Theorem~\ref{thm:convergence_theta_bar} shows that the off-diagonal component error is $O\left( \frac{1}{L} \right)$. The convergence rate to $\frac{1}{4}$ is logarithmic. As is demonstrated in Figure~\ref{fig:stable_norms}, it is possible to control the norm of the projection term using a proper stopping time and scaling the depth appropriately, as per Theorem~\ref{thm:deep_stable_ntk}. This reflects the fact that in practice, typical experiments perform early stopping, while depth is usually much smaller than the size of the dataset. These two characteristics in combination thus yield the ``Goldilocks'' zone where the NTK is stable, yet it retains its expressivity. To support this idea, we use the setting of Example~\ref{ex:deep_stable} with $n_0 = 128$ to measure the smallest eigenvalue of $\bar{\Theta}_{\infty} \left( X X^\top \right)$ in Figure~\ref{fig:smallest_eigval_scaling}, and track the bound on $L$ from Theorem~\ref{thm:deep_stable_ntk} in Figure~\ref{fig:scale_depth_uniform}. We observe that for $L \in \{1, \dots, 30\}$, the smallest eigenvalue increases with depth, even as we increase the dataset size $n$. For large values of $n$, a depth of 5 stabilizes the smallest eigenvalue, while guaranteeing that $f_t$ is expressive near $X$. While it remains possible to go beyond this depth, values of $L$ that are too large will result in data-independent predictions~\citep{xiao2020disentangling}.

\begin{figure}[htbp]
    \centering
    \includegraphics[width=0.95\linewidth]{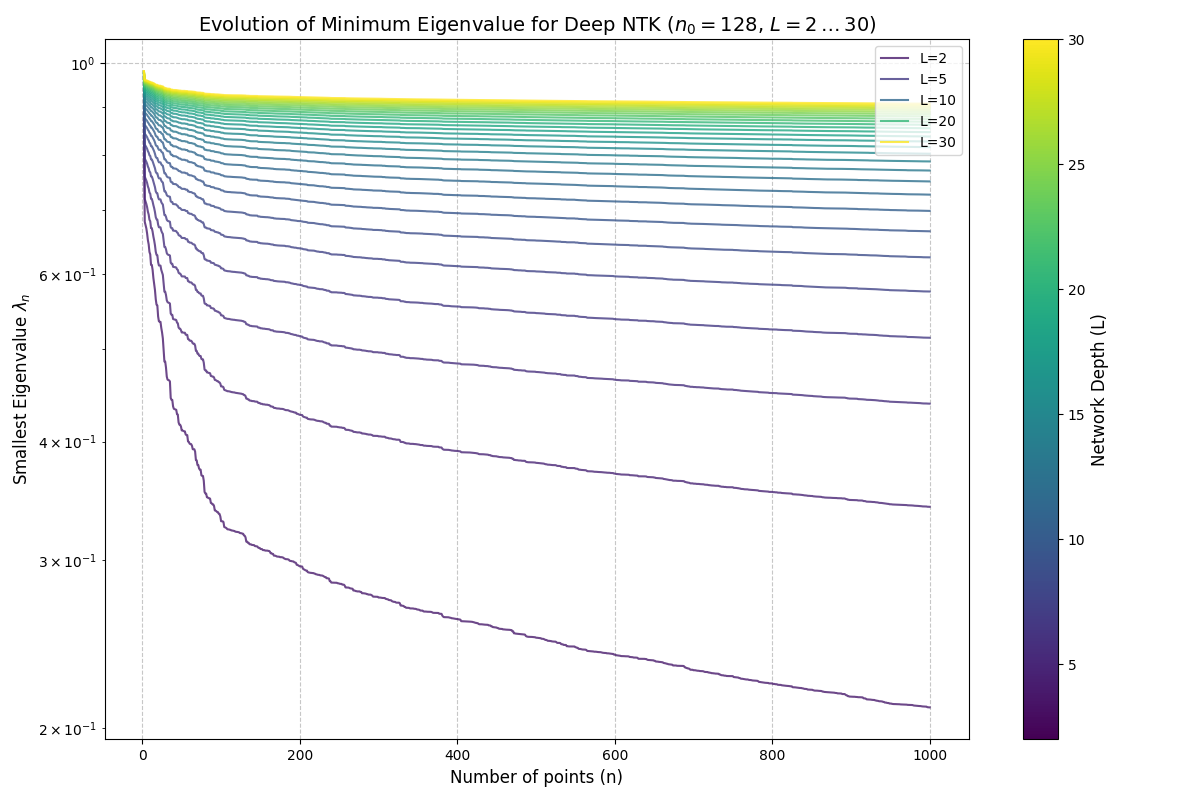}
    \caption{Evolution of the smallest eigenvalue of $\Theta_{\infty}^{(L)} \left( X X^\top \right)$ for varying depths $L$ and dataset size $n$. We used $n_0 = 128$.}
    \label{fig:smallest_eigval_scaling}
\end{figure}

\begin{figure}[htbp]
    \centering
    \includegraphics[width=0.95\linewidth]{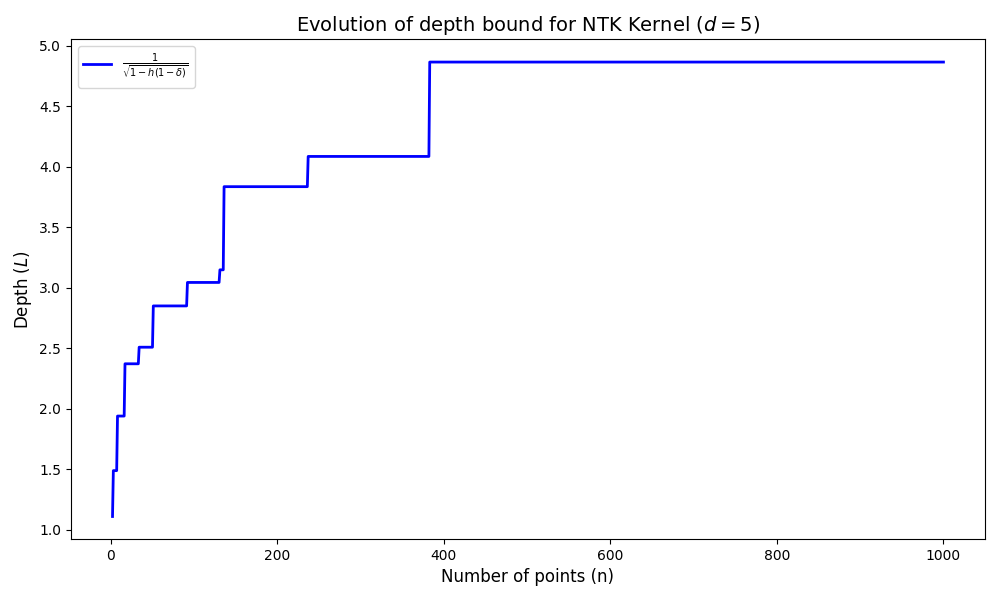}
    \caption{Depth bound, where $L = \max_{i,j \in [n] \mid i \neq j} \frac{1}{\sqrt{1 - h\left( x_i^\top x_j) \right)}}$, as the dataset size $n$ increases. We sampled points on $S^{n_0 -1}$ uniformly with $d=128$ as in Example~\ref{ex:deep_stable}.}
    \label{fig:scale_depth_uniform}
\end{figure}

\section{Conclusion}\label{sec:conclusion}

In this article, we provide a detailed analysis of the evolution of features in the ``lazy regime''. We cast doubt on the use of the term ``lazy'' to describe this learning regime. We show that contrary to previously held beliefs~\citep{Chizatetal2019, yang2021tensor, jacot2018neural}, which only looked at the evolution of single hidden neurons, the full hidden layers evolve during training and do not remain fixed to their initialization as the width increases to infinity. Our results motivate to further study the ``lazy regime'' and to understand what are its generalization properties. While we do not claim that it outperforms the mean-field regime or $\mu P$ parameterization, having access to a closed-form solution can spur the analysis of the properties of deep neural network, such as analyzing the role of depth in generalization.

Motivated by this perspective, we also share insights into the behaviour of the deterministic kernel $\Theta_{\infty}^{(L)}$ as $L \to \infty$ to understand how depth affects a model's predictions. We observe that under the conditions of \citet{jacot2018neural} and with $L \in o(\min_{l=1, \dots, L-1} n_l)$, a fully-connected ReLU neural network can be asymptotically stable for any $x$ given a fixed dataset $X$. A sufficient condition for this behaviour is provided in Theorem~\ref{thm:deep_stable_ntk}, where we control the depth as a function of $n$ and where we perform early stopping. The studied kernel exhibits behaviour consistent with the ordered phase. While our results are for data in $S^{n_0-1}$, we can extend to the general domain as long as there is no colinearity, or by taking the stereographic projection in a space with one additional dimension. Concurrently, we have shown the convergence rates of a non-exhaustive list of kernels as depth $L$ increases. We demonstrate empirically that, with the sublinear convergence to the limiting kernel, we can experimentally control the output $f_t$ in a manner that aligns with experimental settings of most machine learning research. We leave the systematic re-evaluation of the ``lazy regime'' on other domains, leveraging the ideas we presented in this article, as a future research direction. Finally, we provided a list of key properties that were necessary to obtain our results, paving the way for their generalization to other kernels. 

Our framework provides a theoretical foundation for understanding the role of features and depth in the deterministic limiting kernels of overparameterized neural networks. Future research should envisage to characterize the behaviour of kernels that arise in the context of other architectures such as CNNs and architectures with skip connections (e.g. ResNets~\citep{he2016deep}). As mentioned, the setting of \citet{hanin2020finite} is outside the purview of our depth analysis as the stochasticity of the NTK becomes highly relevant when $L \gg \max_{l=1,\dots,L-1} n_l$. Other parameterization regimes should also be studied further, and it would be interesting to explore the properties of learned features in each regime (e.g., task transferability properties and generalization). A systematic and in-depth re-evaluation of feature evolution in the ``lazy regime'' is needed, particularly for computationally intensive tasks: proper feature learning in these cases can motivate passing to the closed-form solution studied in this article to benefit from the complex modelling capability of overparameterized networks at a reduced computational cost.

\acks{This work was funded by the NSERC Grant no. 2024-04051, NSERC Grant no. 2021-04378,
and Canada CIFAR AI Chair. This research was enabled in part by support provided by
Calcul Québec (\url{www.calculquebec.ca}), Compute Canada (\url{www.computecanada.ca}) and
Mila (\url{www.mila.quebec})}.

\bibliography{ref}

\newpage
\appendix

\section{Theoretical assumptions}\label{sec:assumptions}

For the proofs in Section~\ref{subsec:featuresNTK} and~\ref{sec:limiting_kernel}, we make the following list of explicit assumptions:

\begin{enumerate}[align=left]
    \item[\textbf{Activation $\sigma$}:] ReLU activation.
    \item[\textbf{Output dimension}:] The output dimension considered is $n_L = 1$ for a network of depth $L$.
    \item[\textbf{Biases}:] The neural networks do not contain any biases (i.e. $\beta = 0$).
    \item[\textbf{Data}:] The data has a fixed representation on $S^{n_0-1} \subseteq \mathbb{R}^{n_0}$ and no duplicates are in $X$.
    \item[\textbf{Kernel $\Theta_{\infty}^{(L)}$}:] Limiting kernel of a fully-connected ReLU neural network of depth $L$, output dimension $n_L=1$ and without biases, under infinite width (i.e. $\min_{1 \leq l \leq L-1} n_l \to \infty$).
\end{enumerate}

Introducing biases via $\beta > 0$ changes $\Sigma^{(L)}$ and $\Theta_{\infty}^{(L)}$. Therefore, Proposition~\ref{prop:correlated_sigma} does not hold in its exact form. However, we can show that in the limit $L \to \infty$, the kernel $\bar{\Theta}_{\infty}^{(L)}$ converges to an invertible matrix whose off-diagonal entries are all equal. By the same token, we can apply the ideas of Theorem~\ref{thm:deep_stable_ntk}, and we obtain a stable expression for $f_t$. In the general list of properties for $\kappa^{(L)}$ that is found in Section~\ref{sec:experiments}, the value of $\hat{L} = 2$ as the limiting kernel becomes positive definite on the sphere for $L \geq 2$.

\section{Supplementary definitions}\label{app:otherdefinitions}

\begin{deff}[Inverse stereographic projection embedding]\label{def:inverse_stereographic_projection}
    Let $m \in \mathbb{N}$, $q \in S^{m}$, and $\pi_{\text{proj}}: S^{m} \to \mathbb{R}^{m+1}$ be the stereographic projection from $S^{m}$ to $\mathbb{R}^{m+1}$ through the point $q$. The point $q$ can be thought as the ``point at infinity'' in $\mathbb{R}^{m+1}$. The \textbf{inverse stereographic projection embedding} $\pi_{\text{inv}}: \mathbb{R}^{m} \to S^m$ is defined as
    \begin{equation*}
        \pi_{\text{inv}} = \pi_{\text{proj}}^{-1} \circ \pi_{\text{emb}},
    \end{equation*}
    where $\pi_{\text{emb}}: \mathbb{R}^{m} \to \mathbb{R}^{m+1}$ is defined by
    \begin{equation*}
        \pi_{\text{emb}}(x) = (x_1, \dots, x_m, 0).
    \end{equation*}
\end{deff}

\begin{deff}[Logarithmic convergence]\label{def:logarithmic_convergence}
    Given a sequence $\{x_n\}_{n=1}^\infty$, the sequence has a logarithmic convergence rate if $\lim_{n \to \infty} x_n = x$ for some $x \in \mathbb{R}$ and 
    \begin{equation*}
        \lim_{n \to \infty} \frac{\left\lvert x_{n+1} - x \right\rvert}{\left\lvert x_n - x \right\rvert} = 1, \quad
        \lim_{n \to \infty} \frac{\left\lvert x_{n+2} - x_{n+1} \right\rvert}{\left\lvert x_{n+1} - x_n \right\rvert} = 1.
    \end{equation*}
\end{deff}

\section{Additional lemmas, propositions, and theorems}\label{sec:additional_lemmas}

\lemmaappendix*
\begin{proof}
    We observe that $\rho^{(L+1)}(x, x') = h(\rho^{(L)})$, where $h$ is the function
    \begin{equation*}
        h(z) = \frac{z \arcsin(z)}{\pi} + \frac{\sqrt{1 - z^2}}{\pi} + \frac{z}{2}.
    \end{equation*}
    This function has derivative $h'(z) < 1$ on any fixed interval $\left[a, b\right] \subsetneq \left[-1, 1\right]$ such that $b \neq 1$ and is continuously differentiable on the same interval. Therefore, if $H_n(z) = (h \circ \dots \circ h)(z)$ denotes the $n^\text{th}$ power composition of $h$, we have $H_L(z) \to \beta$, some unique fixed-point, uniformly on $\left]-1, 1\right[$. This can be proved by observing that the domain of $H_L$ is a compact set and that for any metric $d$, the distance $d\left( H_L(z_1), H_L(z_2) \right) < d\left( z_1, z_2 \right)$ for $z_1 \neq z_2$ (first show that $d\left( H_L(z), z \right)$ is continuous and has a minimum which is $0$). Furthermore, we have that $h(z) \geq z$ on $\left[-1, 1\right]$, with strict inequality on $\left[-1, 1 \right[$. This implies that $\beta = 1$ and the proof is finished.
\end{proof}

The following proposition provides a proof sketch of the convergence of $\eta^{(L)}$ as $L \to \infty$ and other properties that satisfy the requirements for Theorem~\ref{thm:deep_stable_ntk}.

\begin{prop}[Convergence of $\eta^{(L)}$]\label{prop:convergence_eta}
    The values $\eta^{(L)}(x, x')$ converge to a unique $\beta > 0$ for all $x, x' \in S^{n_0  - 1}$ as $L \to \infty$. Moreover, the kernels $\eta^{(L)}$ are positive definite on $S^{n_0 -1}$ and satisfy $\eta^{(L)}(x, x) \geq \eta^{(L)}(x_1, x_2)$.
\end{prop}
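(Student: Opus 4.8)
The plan is to read the recursion as $\eta^{(L+1)}(x,x') = h\bigl(\eta^{(L)}(x,x')\bigr)$ with $h(z) = (1+e^{-z})^{-2}$ (the $h$ used in the definition of $\eta^{(L)}$, not the one of Definition~\ref{def:hfun}), so that $\eta^{(L)}(x,x') = h^{\circ(L-1)}(x^\top x')$ with the initial value $x^\top x'$ ranging over $[-1,1]$ for $x,x'\in S^{n_0-1}$, and then to dispatch the three assertions separately. For convergence — which parallels the proof of Lemma~\ref{lem:convergence_rho} — I would compute $h'(z) = 2e^{-z}(1+e^{-z})^{-3}$, substitute $u = e^{-z} > 0$, maximize $2u(1+u)^{-3}$ (critical point $u=1/2$), and read off $\sup_{z\in\mathbb{R}} h'(z) = 8/27 < 1$. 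Hence $h$ is a strict contraction of $\mathbb{R}$ into $(0,1)$; Banach's fixed point theorem gives a unique fixed point $\beta = h(\beta) \in (0,1)$, and $h^{\circ(L-1)}(x^\top x') \to \beta$ at geometric rate $(8/27)^{L-1}$, uniformly over $x,x'$ since all initial values lie in the compact set $[-1,1]$. In particular $\beta > 0$.

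For the monotonicity claim: since $h' > 0$ everywhere, $h$ and every iterate $h^{\circ k}$ is strictly increasing. For $L=1$ the inequality is $\eta^{(1)}(x,x) = x^\top x = 1 \ge x_1^\top x_2 = \eta^{(1)}(x_1,x_2)$; for $L \ge 2$ it becomes $h^{\circ(L-1)}(1) \ge h^{\circ(L-1)}(x_1^\top x_2)$, which follows from $x_1^\top x_2 \le 1$ together with monotonicity of $h^{\circ(L-1)}$.

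Positive definiteness is the genuinely delicate step and the one I expect to be the main obstacle. The base case $\eta^{(1)}(x,x') = x^\top x'$ is a Gram matrix, hence positive semidefinite; for general $L$ the natural route is Schoenberg's characterization of dot-product kernels on the sphere — $(x,x') \mapsto \phi(x^\top x')$ is positive semidefinite on $S^{n_0-1}$ iff the expansion of $\phi$ in Gegenbauer polynomials $C_k^{((n_0-2)/2)}$ has only nonnegative coefficients, and is strictly positive definite on any finite set of distinct points once infinitely many of those coefficients are strictly positive. The convenient sufficient condition, nonnegativity of the \emph{monomial} Taylor coefficients, fails here: the Taylor series of $h$ about $0$ converges on $[-1,1]$ (the nearest singularities being at $\pm i\pi$) but has, e.g., cubic coefficient $-1/48 < 0$, and this same fact blocks the naive induction ``$h$ applied entrywise to a positive semidefinite matrix with entries in $[-1,1]$ stays positive semidefinite.'' One therefore has to control the dimension-dependent Gegenbauer coefficients of $h$ (and of its iterates $h^{\circ k}$) directly. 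I would attempt the reduction $h = g^2$ with $g(z) = (1+e^{-z})^{-1}$ the logistic function, so that by the Schur product theorem it suffices to show $g(x^\top x')$ is positive semidefinite on $S^{n_0-1}$, ideally via an explicit integral/feature representation of $g$ on $[-1,1]$; the coefficient bookkeeping here is where the real work lies, and it is genuinely sensitive to $n_0$ — for large $n_0$ the negative cubic monomial coefficient already forces a negative Gegenbauer coefficient, so the statement is best read for fixed $n_0$ (or with ``positive definite'' understood as ``positive semidefinite''), with strict definiteness recovered only for $L \ge \hat{L}$ by perturbing around the rank-one limit $\beta\mathbf{1}_n\mathbf{1}_n^\top$ and exploiting that the diagonal strictly exceeds every off-diagonal entry (by strict monotonicity of $h$). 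This positive-definiteness step is the bottleneck; the convergence and monotonicity parts are routine.
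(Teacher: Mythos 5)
Your convergence and monotonicity arguments are correct and essentially coincide with the paper's sketch: the paper also argues convergence from $h' < 1$ (you sharpen this to the explicit bound $\sup_z h'(z) = 8/27$ and invoke Banach's fixed point theorem, which is cleaner), and monotonicity from $h$ being increasing together with $x^\top x = 1 \ge x_1^\top x_2$. Your reading of the recursion as $\eta^{(L+1)} = h(\eta^{(L)})$ is the intended one (the $\kappa^{(L)}$ in the paper's display is evidently a typo).

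The gap is the positive-definiteness claim, which your proposal does not prove. The fallback you sketch does not close it: the Schur-product reduction $h = g^2$ fails because the logistic $g$ has the same negative cubic Taylor coefficient $-1/48$, so $g(x^\top x')$ is itself not positive semidefinite on spheres of sufficiently high dimension; and the perturbation around the rank-one limit $\beta \mathbf{1}_n \mathbf{1}_n^\top$ cannot give strict definiteness, since diagonal and off-diagonal entries converge to the \emph{same} $\beta$ and no diagonal dominance is available. That said, your diagnosis is accurate and in fact exposes the weakness of the paper's own proof sketch: the paper justifies positive definiteness by analyticity of $h(z) = (1+e^{-z})^{-2}$ and "infinitely many strictly positive even and odd power-series terms," citing a Schoenberg--Gneiting-type criterion, but that criterion presupposes that \emph{all} relevant expansion coefficients are nonnegative (Taylor coefficients for all dimensions, Gegenbauer coefficients for fixed $n_0$), and $h$ has $c_3 = -1/48 < 0$ (also $c_4 = -1/96 < 0$), exactly as you computed. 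Consequently $h(x^\top x')$ is not positive (semi)definite on $S^{n_0-1}$ for all $n_0$, and for fixed $n_0$ one would have to verify nonnegativity of the dimension-$n_0$ Gegenbauer coefficients of $h$ and of its iterates directly, or weaken the statement to what Theorem~\ref{thm:rde} actually needs, namely positive definiteness of $\eta^{(L)}(XX^\top)$ only for $L \ge \hat{L}$. So: your convergence and monotonicity parts match the paper and are fine; the positive-definiteness step remains unproven in your proposal, but it is equally unproven in the paper's sketch, and your negative-coefficient observation is the concrete reason why.
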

\begin{proof}[Proof sketch]
As  $L \to \infty$, all values converge to the same limit since the derivative of $h$ is strictly smaller than $1$ on $\left[-1, 1 \right]$. The kernels $\eta^{(L)}$ also satisfy $\eta^{(L)}(x, x) \geq \kappa^{(L)}(x_1, x_2)$ since $h(z)$ is monotone increasing in $z \in \left[ -1, 1 \right]$. The kernels are all positive definite on $S^{n_0 - 1}$ since the function $h$ is analytic on $\left[-1, 1\right]$ and it has infinitely many even and odd terms in its power series expansion at $0$ (i.e. not and even or odd function) that are strictly positive~\citep{gneiting_positive_definite}.
\end{proof}

The following is the proof of Proposition~\ref{prop:theta_recursive}.
\thetarecursive*
\begin{proof}
    The fact that the values are all contained in $\left[ 0, 1 \right]$ is immediate from the definition of $\bar{\Theta}_{\infty}^{(L)}$ and Proposition~\ref{prop:closedform}. Now,
    \begin{align*}
        \Theta_{\infty}^{(L+1)}(x, x') &= \dot{\Sigma}^{(L+1)}(x,x') \Theta_{\infty}^{(L)}(x, x') + \Sigma^{(L+1)}(x, x') \\
        &= \frac{1}{2} h'\left(\rho^{(L)}(x, x') \right) \Theta_{\infty}^{(L)}(x, x') \\&+ \frac{1}{n_0 2^L} h\left( \rho^{(L)}(x, x') \right),
    \end{align*}
    where the first equality comes from Theorem~\ref{thm:jacot_ntk} and the second equality uses Propositions~\ref{prop:correlated_sigma} and \ref{prop:closedform}. This implies
    \begin{align*}
        \bar{\Theta}_{\infty}^{(L+1)}(x, x') &= \frac{L n_0 2^L}{(L+1) n_0 2^{L-1}} \\
        &\times \frac{n_0 2^{L-1}\Theta_{\infty}^{(L)}(x, x')}{L} \frac{1}{2} h'\left( \rho^{(L)}(x, x') \right) \\
        &+ \frac{1}{L+1} h\left( \rho^{(L)}(x,x') \right) \\
        &= \frac{L}{L+1} h'\left( \rho^{(L)}(x, x') \right) \bar{\Theta}_{\infty}^{(L)}(x, x') \\
        &+ \frac{1}{L+1} h\left( \rho^{(L)}(x, x') \right),
    \end{align*}
    where the equalities come from the definition of $\Theta_{\infty}^{(L+1)}(x, x')$ and the normalization factors of $\bar{\Theta}_{\infty}^{(L+1)}$ and $\bar{\Theta}_{\infty}^{(L)}$.
\end{proof}

\section{Further experiments}\label{sec:appendix_further_experiments}

In this section, we present further experiments relating to the theoretical results presented in Section~\ref{sec:limiting_kernel}. We report the mean squared error, accuracy and F1 score on MNIST data in Table~\ref{tab:kernel_generalization_performance_mnist}. For a small dataset of size $500$ and a depth of $30$, we observe good performance. In Figure~\ref{fig:convergence_rates_mnist}, we can see that the limiting solution $\bar{\Theta}_{\infty}^{(L)} \left( X X^\top \right )$ is well approximated at such depths. Therefore, we can model the output of a very wide and deep network using minimal resources. 

\begin{table}[ht]
    \centering
    \begin{tabular}{cc}
        \toprule
        \textbf{Metric} & \\
        \midrule
        Average error (stdev) & $6.435 \, (0.075)$ \\
        Accuracy & $0.874$\\
        F1 score & $0.870$ \\
        \bottomrule
    \end{tabular}
    \caption{Various metrics on kernel regression model for MNIST. The kernel used is $\bar{\Theta}_{\infty}^{(L)} \left( X X^\top \right)$. The size of the dataset was set to $\left\lvert X \right\rvert = 500$ and the depth to $L=30$.}
    \label{tab:kernel_generalization_performance_mnist}
\end{table}

\end{document}